
\documentclass{article}

\usepackage{microtype}
\usepackage{graphicx}
\usepackage{subfigure}
\usepackage{booktabs} 

\usepackage{hyperref}



\usepackage[accepted]{icml2021}

\icmltitlerunning{What fools you makes you stronger}

\usepackage{thmtools,thm-restate}
\usepackage{comment}

\usepackage{thmtools}
\usepackage{mathtools}
\usepackage{enumerate}

\usepackage{amsfonts}
\usepackage{amsthm}
 
\newcommand{\N}{\mathbb{N}}
\newcommand{\Z}{\mathbb{Z}}

\newtheorem{theorem}{Theorem}
\newtheorem{lemma}{Lemma}
\newtheorem{observation}{Observation}

\newtheorem{definition}{Definition}

\newcommand{\e}{\epsilon}

\newlength\myindent
\setlength\myindent{2em}

\DeclarePairedDelimiter{\ceil}{\lceil}{\rceil}

\begin{document}

\twocolumn[
\icmltitle{
Adversarial Robustness: What fools you makes you stronger}



\icmlsetsymbol{equal}{*}

\begin{icmlauthorlist}
\icmlauthor{Grzegorz Głuch}{to}
\icmlauthor{Rüdiger Urbanke}{to}
\end{icmlauthorlist}

\icmlaffiliation{to}{School of Computer and Communication Sciences, EPFL, Switzerland}

\icmlcorrespondingauthor{Grzegorz Głuch}{grzegorz.gluch@epfl.ch}

\icmlkeywords{Machine Learning, ICML}

\vskip 0.3in
]



\printAffiliationsAndNotice{\icmlEqualContribution} 

\begin{abstract}
We prove an exponential separation for the sample complexity between the standard PAC-learning model and a version of the Equivalence-Query-learning model. We then show that this separation has interesting implications for adversarial robustness. We explore a vision of designing an adaptive defense that in the presence of an attacker computes a model that is provably robust. In particular, we show how to realize this vision in a simplified setting.

In order to do so, we introduce a notion of a strong adversary: he is not limited by the type of perturbations he can apply but when presented with a classifier can repetitively generate different adversarial examples. We explain why this notion is interesting to study and use it to prove the following. There exists an efficient adversarial-learning-like scheme such that for every strong adversary $\mathbf{A}$ it outputs a classifier that (a) cannot be strongly attacked by $\mathbf{A}$, or (b) has error at most $\e$. In both cases our scheme uses exponentially (in $\e$) fewer samples than what the PAC bound requires.

\end{abstract}

\section{Introduction}

The field of adversarial robustness \citep{intriguingprop,neuralnetseasilyfooled} revolves around three themes: designing defenses, attacking these defenses and studying the limitations of both from a theoretical perspective. In this paper we are viewing adversarial attacks as a resource. This point of view connects these three themes.


Before we describe our main contribution in detail, let us position it in a broader context.
Whereas most of the current literature considers adversarial attacks as a problem in need of a solution, we ask if they could not be seen as a resource to be exploited. As the old adage goes: "Fool me once, shame on you; fool me twice, shame on me." In this light, we envision an adaptive defense. Starting with an initial model that might only be moderately robust we ask if, in the presence of an attacker, this model can evolve to become provably robust. This idea resembles the adversarial learning approach considered in \citet{goodfellow2014explaining,madry2018towards,tramer2018ensemble,xiao2018training}. Although our current proposal does not fully realize the general vision, it has the desired property of provably improving the learner in the presence of an adversary if we restrict ourselves to a slightly simpler setting. We accomplish this by combining the following three elements.

First: Most of the literature on adversarial robustness considers adversaries whose perturbations are in some way restricted. A very common form of restriction is to bound the perturbation in the $\ell_0, \ell_2$, or $\ell_{\infty}$ norm  \citep{certifiableRobust1, certifiableRobust2}, but other perturbations (rotations, shifts, etc.) were also considered \citep{spatialpert}. A recent paper \citep{Goldwasser} argues that in order to obtain security for real-world systems we need to consider models beyond these "restricted perturbations." Unfortunately this change comes at a price: e.g., the results in \citet{Goldwasser} are based on the assumption that the learner can decide not to give an answer for some inputs (selective learning \citep{selectiveclassifiers}) and that they see the test set upfront (transductive learning). In the spirit of allowing the attacker more freedom we argue that one of the strongest adversaries one can imagine is one that, given a classifier $f$, can sample errors of $f$ according to the data distribution $\mathcal{D}$. We will call such an adversary a "strong adversary" and explain in Section~\ref{sec:discussion} why it is natural to consider them.

Second: Inspired by the main idea of cryptography to base security on computational hardness of specific problems we condition the security of our scheme on the hardness of learning. More specifically, our adversarial learning scheme computes a classifier that is robust assuming that the underlying learning task is hard in the low error regime. 

Third: We use ideas similar to those used in boosting techniques \citep{shapireboosting} to obtain an exponential separation for sample complexity of two learning models: standard PAC-learning \citep{pacvaliant} and a version of EquivalenceQuery-learning (EQ-learning) \citep{angluinEQ}. This separation is the main technical contribution of the paper. The EQ-learning model we study was considered previously in \citet{randomcounterexampleangluin} and \citet{eqboosting}. The result we took most inspiration from is \citet{eqboosting}, where the authors develop a boosting algorithm that is applied to the EQ-learning model.

Summarizing: We show an adversarial-learning-like scheme that, if strongly attacked by adversaries with limited learning power, evolves to be robust against them.

Our result has many of the properties we were looking for. Firstly, there are no explicit restrictions put on the allowed perturbations. Secondly,  the model evolves  and the risk decays exponentially (in the number of queries) until it reaches a point (due to the hardness of the underlying learning task) where the adversary no longer is capable of  attacking the model in a strong manner. So the adversary serves as a resource that helps to train the model and make it robust.



As already mentioned, our scheme does not yet fully realize the general vision of a continuously evolving learning algorithm that provably improves its robustness in the presence of an attacker. Let us explain why.
 
Firstly, in the standard execution of the adversarial learning framework, the learner presents to the adversary at every step his current best estimate. In our result we need the learner to occasionally present to the adversary a modification of his current model. In other words, we occasionally need extra input besides standard attacks. Secondly, we assume that the adversary attacks the learner in a "strong sense", as defined above.

\section{Models of learning and attack}

We start by defining the models of learning and attacks considered in this paper.

\paragraph{Notation.} For $i \in \N$ we define $[i] := [1,2,\dots,i]$. We use $\log$ to denote the $\log$ to the base $2$. For a function $f \xrightarrow{} \{-1,+1\}$ and $ A \subseteq X$ we define $f \oplus A$ as a function that is equal to $f$ at $X \setminus A$ and flips the prediction for points in $A$. For a distribution $\mathcal{D}$ on $X$ and $A \subseteq X$ we denote by $\mathcal{D}|_A$ the conditional distribution that is equal (up to scaling) to $\mathcal{D}$ on $A$ and $0$ on the complement.

Throughout the paper we consider only the realizable version of learning. That is, we assume that there is a feature space $X$, a distribution $\mathcal{D}$ on $X$, a hypothesis class $\mathcal{H}$ and a function $h \in \mathcal{H}, h : X \xrightarrow{} \{ -1, +1 \}$, that defines the ground truth on $\mathcal{D}$. The \textit{learner} knows $X$ and $\mathcal{H}$ but neither knows $h$ nor $\mathcal{D}$. The goal of the learner is to find an $f : X \xrightarrow{} \{ -1, +1 \}$ that has small risk. The risk is defined as
$$R_{\mathcal{D},h}(f) := \mathbb{P}_{x \sim \mathcal{D}}[f(x) \neq h(x)]\text{.}$$
We will consider algorithms that have access to one of the following two oracles:

\paragraph{Example Query Oracle according to $\mathcal{D}$ ($\text{EX}_{\mathcal{D}}$).} When queried, $\text{EX}_{\mathcal{D}}$ returns $(x,h(x))$, where $ x \sim \mathcal{D}$.

\paragraph{Equivalence Query Oracle according to $\mathcal{D}$ ($\text{EQ}_{\mathcal{D}}$) \citep{angluinEQ}.} For every $f : X \xrightarrow{} \{-1,+1\}$ (not necessarily from $\mathcal{H}$) the result of querying $\text{EQ}_{\mathcal{D}}(f)$ is a counterexample to $f$ distributed according to $\mathcal{D}$. More formally it is $x \sim \mathcal{D}|_{f \neq h}$. If $R_{\mathcal{D},h}(f) = 0$ then $\text{EQ}_{\mathcal{D}}(f)$ returns "YES" indicating that $f$ is equivalent to $h$. For every $k \in \N$ we write $\text{EQ}_{\mathcal{D}}(f,k)$ to denote an oracle that returns $k$ i.i.d. samples, each generated by $\text{EQ}_{\mathcal{D}}(f)$.

Next we define the two learning models for which we later show the advertised exponential separation. All our results are based on them.

\paragraph{PAC-learning.}
We say that a learning algorithm \textbf{L} PAC-learns $\mathcal{H}$ if for every $h \in \mathcal{H}$, distribution $\mathcal{D}$, and $\e,\delta \in (0,1)$, the algorithm $\textbf{L}(\e,\delta)$ asks queries to the Example Query Oracle $\text{EX}_{\mathcal{D}}$ and with probability $1- \delta$ returns a function $f \in \mathcal{H}$ such that $R_{\mathcal{D},h}(f) \leq \e$. 

The next learning model was considered before in \citet{randomcounterexampleangluin} and \citet{eqboosting}.

\paragraph{EQ-learning}
We say that a learning algorithm \textbf{L} EQ-learns $\mathcal{H}$ if for every $h \in \mathcal{H}$, distribution $\mathcal{D}$, and $\e,\delta \in (0,1)$ the algorithm $\textbf{L}(\e,\delta)$ asks queries to the Equivalence Query Oracle $\text{EQ}_{\mathcal{D}}$ and with probability $1- \delta$ returns a function $f : X \xrightarrow{} \{ -1, +1\}$ such that $R_{\mathcal{D},h}(f) \leq \e$. 

We define the adversary as an algorithm that has access to the function it wants to attack and also to the Example Query Oracle for distribution $\mathcal{D}$ and returns points from $X$. More formally:

\begin{definition}[Adversary]\label{def:adversary}
For a feature space $X$ we define an adversary $\mathbf{A}$ as an algorithm\footnote{We use {\em algorithm} here since this seems more natural. But we do not limit the attacker computationally nor are we concerned with questions of computability. Hence, {\em function} would be equally correct.} (potentially randomized) that for every function $f : X \xrightarrow{} \{-1,+1\}$ and $x \in X$ returns $\mathbf{A}(f,x) \in X$. Moreover, for a distribution $\mathcal{D}$, we denote by $\mathbf{A}(f,EX_{\mathcal{D}})$ a distribution on $X$ that is generated according to a process: sample $x \xleftarrow{} EX_{\mathcal{D}}$, return $\mathbf{A}(f,x)$. For every $f$ we say that \textbf{A} is a \textbf{strong adversary} for $f$ if: $$\textbf{A}(f,EX_{\mathcal{D}}) = EQ_{\mathcal{D}}\text{.}$$
\end{definition}

Finally, let us define the model of attack. It is an adversarial-learning-like scheme where the learning algorithm uses the adversary to learn a more robust model.

\begin{definition}[Adversarial learning game]\label{def:game}
For a distribution $\mathcal{D}$ on $X$, a learner $\mathbf{L}$, and an adversary $\mathbf{A}$ we define an adversarial learning game as follows. Learner $\mathbf{L}$ interacts with $\mathbf{A}$ in rounds, where in each round $t$ learner $\mathbf{L}$ sends a function $f_t$ to $\mathbf{A}$ and then $\mathbf{A}$ sends a point $x_t \in X$  back to $\mathbf{L}$. At every round $\mathbf{A}$ can query $EX_{\mathcal{D}}$ once and use the result when generating $x_t$. Decisions made by $\mathbf{L}$ at round $t$ can depend on the messages exchanged before round $t$. For simplicity of the statements we assume that decisions of $\mathbf{A}$ don't depend on the history. At the end of the interaction $\mathbf{L}$ declares a function $f$.
\end{definition}

\section{Main result}

We are now ready to state the main result of the paper. The proof is deferred to the appendix:

\begin{theorem}\label{thm:ip}
For every feature space $X$, for every $\epsilon \in \left(0,\frac{1}{32} \right)$, for every $d \in \N$, for every hypothesis class $\mathcal{H}$ on $X$ of VC-dimension $d$ there exists a learning algorithm $\mathbf{L}$ such that for every distribution $\mathcal{D}$, for every ground truth $h \in \mathcal{H}$, for every adversary $\mathbf{A}$ the following holds. When $\mathbf{L}$ interacts with $\mathbf{A}$ as described in Definition~\ref{def:game} then with probability $2/3$ at least one of the two properties holds:
\begin{itemize}
    \item $\mathbf{L}$, after $O(d \cdot \text{polylog}(1/\epsilon))$ rounds of interaction with $\mathbf{A}$, returns a function $f$ such that $R_{\mathcal{D},h}(f) \leq \epsilon$,
    \item there exists $t \in [O(d \cdot \text{polylog}(1/\epsilon))]$ such that at the interaction round $t$ a function $f_t$ was presented to $\mathbf{A}$ and $\mathbf{A}(f_t, EX_{\mathcal{D}}) \neq EQ_{\mathcal{D}}(f_t)$.
\end{itemize}
Moreover, throughout the interaction only $O(\text{polylog}(1/\e))$ different functions are presented to \textbf{A}.
\end{theorem}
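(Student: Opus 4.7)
The first observation is that if $\mathbf{A}$ is a strong adversary for every function $f_t$ that $\mathbf{L}$ presents during the game, then by Definition~\ref{def:adversary} the sample $\mathbf{A}(f_t,EX_{\mathcal{D}})$ is distributed exactly as $EQ_{\mathcal{D}}(f_t)$. So it suffices to exhibit an EQ-learning algorithm for $\mathcal{H}$ that, with probability at least $2/3$, outputs an $\epsilon$-error hypothesis while making $O(d\cdot\mathrm{polylog}(1/\epsilon))$ EQ-queries in total and presenting only $O(\mathrm{polylog}(1/\epsilon))$ distinct functions. Substituting $\mathbf{A}$ for $EQ_{\mathcal{D}}$ in such an algorithm, bullet~(1) follows whenever every round was strongly attacked; otherwise there is some round $t$ at which $\mathbf{A}(f_t,EX_{\mathcal{D}})\neq EQ_{\mathcal{D}}(f_t)$, and bullet~(2) is immediate.

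The EQ-learner I would build is a boosting-type scheme in the spirit of \citet{eqboosting}. It maintains a hypothesis $F_t$ and proceeds for $T=O(\log(1/\epsilon))$ outer rounds. In round $t$, query $EQ_{\mathcal{D}}(F_t,k)$ with $k=\Theta(d\log T)$; the returned points are i.i.d.\ draws from $\mathcal{D}|_{A_t}$ with $A_t=\{F_t\neq h\}$, and each carries the implicit label $-F_t(x)=h(x)$. Since $h\in\mathcal{H}$, this is a realizable labeled sample from $\mathcal{D}|_{A_t}$, so standard VC-dimension bounds yield a weak hypothesis $g_t\in\mathcal{H}$ with $\Pr_{x\sim\mathcal{D}|_{A_t}}[g_t(x)\neq h(x)]\leq 1/4$ and failure probability $\leq 1/(3T)$. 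Constant (not $\epsilon$) accuracy is what matters here, which is why only $O(d\log T)$ samples are required per round; the $\log T$ factor is pure confidence amplification, feeding the union bound over the outer loop.

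The principal obstacle is turning these weak hypotheses into a sequence $\{F_t\}$ whose $\mathcal{D}$-error decays by a constant factor per round. The naive update $F_{t+1}=F_t\oplus\{g_t\neq F_t\}$ corrects $\geq 3/4$ of the mass of $A_t$ but may introduce fresh errors in $X\setminus A_t$, a region about which the EQ oracle reveals nothing. I would therefore define $F_{t+1}$ as a weighted majority vote over $g_1,\dots,g_{t+1}$, AdaBoost-style, whose margin-based potential controls the $\mathcal{D}$-error globally rather than only on $\mathcal{D}|_{A_t}$. The standard boosting analysis then gives $R_{\mathcal{D},h}(F_T)\leq\epsilon$ after $T=O(\log(1/\epsilon))$ rounds. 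Across the whole run the number of distinct functions $F_t$ presented to $\mathbf{A}$ is $T=O(\log(1/\epsilon))$, and the total number of samples drawn is $T\cdot k=O(d\log(1/\epsilon)\log\log(1/\epsilon))=O(d\cdot\mathrm{polylog}(1/\epsilon))$, matching the bounds in the statement; the overall failure probability is controlled by a union bound over the $T$ rounds.
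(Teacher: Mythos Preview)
Your reduction in the first paragraph is exactly the paper's: run an EQ-learner with the adversary substituted for $EQ_{\mathcal{D}}$; if every round is strongly attacked the EQ guarantee gives bullet~(1), otherwise bullet~(2) is automatic. The issue lies in the EQ-learner you sketch.

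You correctly identify the obstruction: $g_t$ is constrained only on $A_t=\{F_t\neq h\}$ and the EQ oracle tells you nothing about $X\setminus A_t$, so adding $g_t$ to the vote can create fresh errors there. But your proposed fix --- ``AdaBoost-style weighted majority'' plus ``the standard boosting analysis'' --- does not close this gap. The AdaBoost potential argument needs each weak learner to have edge on the \emph{AdaBoost reweighted distribution} $D_t$, which places positive mass on all of $X$ (more on low-margin points, less on high-margin points). An EQ call on $F_t$ returns i.i.d.\ draws from $\mathcal{D}|_{F_t\neq h}$, i.e.\ only the zero-margin slice, so the $g_t$ you obtain has no edge guarantee on $D_t$ and the AdaBoost recursion does not go through. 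Indeed the paper notes explicitly that a direct boosting transfer (in the style of \citet{eqboosting}) yields only $O(d/\epsilon)$ EQ queries --- no better than the PAC bound --- and that ``a simple majority vote of the previously constructed classifiers is not sufficient,'' precisely because correctly classified regions can drift wrong with growing confidence.

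The paper's actual EQ-learner incorporates two ideas your sketch lacks. First, the running vote is \emph{clipped} to $[-B_{\epsilon'},B_{\epsilon'}]$ so that no region can accumulate unbounded (wrong) confidence. Second, and crucially, at each outer step the algorithm queries $EQ_{\mathcal{D}}$ not only on $\text{Maj}(h_1,\dots,h_{i-1})$ but also on the perturbed hypotheses $h'=\text{Maj}\oplus[\text{Vote}\in\{v,-v\}]$ for each odd $v\in[B_{\epsilon'}]$. Flipping the prediction on the level set $\{\text{Vote}=-v\}$ (a \emph{correctly} classified region) forces the EQ oracle to return samples from that region as well, so the new $h_i$ is trained to agree with $g$ there. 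This is what lets the paper control the mass at every vote level via the ``process on $2\Z+1$'' (Lemmas~\ref{lem:34ofpositivegoesdown}--\ref{lem:processconverges}), and is also why the number of distinct functions presented is $O(B_{\epsilon'}^4)=O(\text{polylog}(1/\epsilon))$ rather than $O(\log(1/\epsilon))$. Without an analogue of this step, your scheme has no mechanism preventing $R_{\mathcal{D},h}(F_{t+1})>R_{\mathcal{D},h}(F_t)$, and the claimed geometric decay is unsupported.
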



\section{Discussion}\label{sec:discussion}


We start with a brief review of previous approaches and the lessons we can draw from those.

The standard adversarial learning setup considered in the literature is as follows. There is a classifier $f$ that an adversary \textbf{A} wants to attack.  Having some type of access to $f$, \textbf{A} generates adversarial examples to $f$. To do so, first a sample $x$ is generated according to $x \sim \mathcal{D}$. Then \textbf{A} tries to find $x'$ such that $x$ and $x'$ are semantically indistinguishable and $f$ misclassifies $x'$. A formal definition of what "semantically indistinguishable" means is difficult to furnish and it is thus the subject of considerably discussion.  The reason for this difficulty is that we expect the machine learning model to learn what "semantically indistinguishable" means from the data and not to decide it by ourselves up front. Many proxies for "semantically indistinguishable" were used in the literature. Usually they come in the form of a restriction on the set of allowed perturbations. As we mentioned in the introduction, some of the most popular restrictions include bounds on the $\ell_0, \ell_2, \ell_{\infty}$ norm or restrictions to rotations, shifts and many others.

As argued in \citet{trameroverfitting} defenses often overfit to a particular set of allowed perturbations and the resulting classifiers remain basically undefended against other attacks. This result sparked interest in defenses that make models robust against a wide variety of perturbations, even against unforeseen ones. A recent theoretical result \citep{Goldwasser} proposed a defense that make models robust against all possible perturbations. Unfortunately to achieve this the authors need to allow the learner not to give answers for some inputs and the learner sees the test set before computing the classifier. 

As shown in \citet{computationalhardness} and \citet{robustnessaccuracy} finding defenses might not be possible even in a simple case of $\ell_2$-bounded perturbations, when the adversary is all powerful. This means that limiting the capabilities of the attacker is very likely to be necessary. Different limitations on the model of attack and the power of the adversary were considered. Early papers limited the type of access that \textbf{A} has to $f$: instead of full knowledge of $f$ (known as white-box model, see \citet{evassionattackontesttime,bestwhiteboxmadrychallenge}), a black-box model \citep{blackboxsurvey}, partial white-box, where the adversary sees the logits of the output probabilities but doesn't see the internal nodes of a network, oracle access to a gradient of $f$ and others were considered. Unfortunately, even in the most restrictive model, namely the black-box model, efficient attacks have been shown to exist \citep{goodfellowblackbox,zerothblackbox,transferableblackbox,blackboxfirstmadrychallenge,adversarialserviceblackbox}. This lead the researches to explore models that limit the power of the adversary even further. In \citet{querycomplexity} the authors consider a version of the black-box, where the attacker is limited by the number of evaluations of $f$ it can perform. It is shown that classifiers with high entropy of decision boundaries are hard to attack. In \citet{berkleyguycrypto} the authors consider an adversary that is limited computationally. They show that there exist learning problems that can be attacked by an all powerful adversary but are secure against polynomially bounded attackers. These approaches however show security only for some synthetic distributions..

Summarizing this discussion we can formulate two conclusions:
\begin{enumerate}
    \item For a model to be secure in real-world applications we need to remove most of the restrictions on allowed perturbations. \label{bullet:norestr}
    \item Limiting the power of the adversary is likely inevitable. \label{bullet:limitheadv}
\end{enumerate}

We now explain how our notion of a strong adversary addresses point \ref{bullet:norestr}.

\paragraph{Why strong adversaries?}

First imagine that we removed the restriction on the perturbations that the adversary can apply completely. Defending against such attackers is impossible, because as long as \textbf{A} finds a single error $x'$ of $f$ it can map any input $x$ to $x'$. That however goes against the very intuition of what adversarial examples are. Those examples should be semantically indistinguishable from points sampled from $\mathcal{D}$. But in the above scenario the claimed "adversarial example" is always the same point! This indicates that completely removing all restrictions is likely not the model we should consider.

So how can we capture the intuition that essentially all perturbations should be allowed but avoid degenerate cases as above? First, note that an adversary as described above is easy to defend against. Simply declare $x'$ to be a "suspicious" input. As a reaction, the adversary might try to fool this defense by presenting the learner $x'$ with small random perturbations. The defense would then likely adapt by learning the type of perturbations the adversary applies and then declare a broader class of inputs as "suspicious," leading to an arms race of defense versus offense.

We break this cycle and argue that for a given classifier $f$ the strongest adversary for $f$ is the one that can generate adversarial examples exactly from the error set of $f$ according to distribution $\mathcal{D}$. More formally (as in Definition~\ref{def:adversary}) we say that \textbf{A} is a strong adversary for $f$ if by having sample access to $\mathcal{D}$ (a.k.a. access to the Example Query Oracle for $\mathcal{D}$) it can generate counterexamples from $\mathcal{D}|_{f \neq h}$, which means that it emulates the $\text{EQ}_{\mathcal{D}}$. 

This adversary has some important properties we were looking for:

There are no explicit restrictions on the type of perturbations \textbf{A} can apply. By getting $x \xleftarrow{} \text{EX}_{\mathcal{D}}$ the adversary is presented with a challenge to find an $x'$, which is semantically indistinguishable from $x$ but is not restricted to what it can do to $x$. The only restriction is that the adversarial examples he produces are statistically indistinguishable from errors of $f$ sampled according to $\mathcal{D}$.

The adversarial examples generated by \textbf{A} don't follow any particular pattern known to the learner. It is because they are distributed according to $\mathcal{D}|_{f \neq h}$ and it is natural to assume that the learner cannot distinguish them from samples from $\mathcal{D}$. Because if he did then, intuitively, he could find an estimate with a lower error as he would know where he makes mistakes. Thus, it is a priori not clear how one would defend against such an adversary.

Moreover, strong adversaries do exist. The adversary doesn't need to know $\mathcal{D}$ exactly even though he is required to sample adversarial examples from $\mathcal{D}|_{f \neq h}$. When presented with $x \xleftarrow{} \text{EX}_{\mathcal{D}}$ he knows the region of $X$ in which he should look for adversarial examples. To see that consider for instance the concentric spheres dataset from the seminal work \citet{adversarialSpheres}. In there it was shown that the error sets of classifiers can be understood as spherical caps. Then it was argued that in high dimensions these spherical caps, even though having very small probability, are very close (in the $\ell_2$ sense) to a constant fraction of the distribution. A typical $\ell_2$ bounded attack finds adversarial examples that are, due to symmetry, distributed approximately uniformly on this spherical cap. This means that the adversary approximately satisfies our assumption of being strong for this particular classifier. But the existence of strong adversaries doesn't only happen for synthetic distributions. The very observation that it is hard to design defenses against adversaries in practice points to the fact that these adversaries are strong for the type of classifiers that are learnt. After all, being unable to defend against them intuitively means that the adversarial examples generated by these attacks don't contain any structure or property we can detect apart from belonging to the error set of $f$. This translates to these examples being distributed approximately according to $\mathcal{D}|_{f \neq h}$.

\paragraph{Implications of our main result}
Now we explain how one can understand our main result (Theorem~\ref{thm:ip}).

The first point of view is that Theorem~\ref{thm:ip} explores implications on the existence of strong adversaries. Interpreted as such, it shows that there exists an adversarial-learning-like scheme where strong adversaries can be used to learn a hypothesis of error $\e$ exponentially faster than guaranteed by the standard learning theory results.

The second, and arguably the more interesting point of view is the following.
If we assume that the adversary we want to defend against is unable to learn a classifier with error $\e$ using $O(d \cdot \text{polylog}(1/\e))$ samples then the adversarial-learning scheme from Theorem~\ref{thm:ip} can be understood as a defense. It is because throughout 
the execution of the protocol there was a function $f_t$ that was presented to \textbf{A} such that $\mathbf{A}(f_t, EX_{\mathcal{D}}) \neq EQ_{\mathcal{D}}(f_t)$. Thus the protocol generates a list of $O(\text{polylog}(1/\e))$ many functions such that \textbf{A} is not a strong adversary for at least one of them. The fact that $\mathbf{A}(f_t, EX_{\mathcal{D}}) \neq EQ_{\mathcal{D}}(f_t)$ implies that there is a structure in the adversarial examples generated by \textbf{A} other than just being in the error set of $f_t$. The structure can be of a form of concentrating adversarial examples in particular regions of the feature space (as in the example we considered at the beginning of the \textbf{Why strong adversaries?} paragraph) or concentrating on points with a particular property. 

The fact that this structure exists opens a door for designing defenses that are provably secure against adversaries that are limited by what they can learn. Assume we introduced a distance $\text{dist}$ on distributions, think of the KL divergence or the earth mover distance. Then we can imagine that an extension of Theorem~\ref{thm:ip} can guarantee that when \textbf{A}'s learning power is limited then the scheme computes $f_t$ such that $\text{dist}(\mathbf{A}(f_t, EX_{\mathcal{D}}), EQ_{\mathcal{D}}(f_t)) \geq \eta$ for some parameter $\eta$. If $\eta$ is big enough one can hope to design a distinguisher between the two distributions and thus detect adversarial examples.

\paragraph{Why it makes sense to assume the learning problem is hard?}
As argued in~\ref{bullet:limitheadv} above, limiting the power of the adversary is most likely unavoidable. 

Imagine that the adversary you want to protect your model against is a very good learner. More formally, assume that for a distribution $\mathcal{D}$ and a ground truth $h \in \mathcal{H}$ the adversary can compute a classifier $g$ of very small risk $R_{\mathcal{D},h}(g) \approx 0$. Then when the adversary attacks $f$ it can compute $[f \neq g] \subseteq X$. Note that as the risk of $g$ is close to $0$ we know that $\mathbb{P}_{x \sim \mathcal{D}}[f(x) \neq g(x)] \approx R_{\mathcal{D},h}(f)$, that is the region $[f \neq g]$ contains almost all errors of $f$. Even though \textbf{A} might not know the data distribution $\mathcal{D}$ (we only assumed that \textbf{A} is able to find a classifier with low error, which doesn't necessarily imply the knowledge of the distribution) it is still able to attack $f$. To do that, for test example $x \sim \mathcal{D}$, \textbf{A} can find $x' \in [f \neq g]$ that is semantically closest to $x$. Then if the error set of $f$ is semantically close to all of the data distribution then $f$ is indefensible against all powerful learners. 

Our result provides an exponential separation between sample complexities, as the standard PAC-bound requires $O\left( \frac{d}{\e} \right)$ samples. This resembles the types of separation results, which cryptography is built on and gives even more hope for following this line of research to find provably secure defenses. Unfortunately this separation doesn't happen for all distributions as the PAC guarantee is only an upper bound. We also know that VC-theory does not necessarily give tight bounds for distributions encountered in practice. We note however that our result if understood as a boosting technique can be applied to any learning algorithm and thus the implications are not necessarily restricted to the VC-theory (for a discussion about that see Section~\ref{sec:interactiveproofs}). Even if not perfect we see our result as an important starting point in investigating the interplay of learnability and adversarial robustness.


\section{The Exponential Separation}

In this section we present the main technical result of the paper, which is an exponential separation for the sample complexity between PAC-learning and EQ-learning. At the end of this section we show how Theorem~\ref{thm:ip} follows.

A slightly improved version of the PAC bound in the realizable case \citep{optimalpac} states that for a hypothesis class $\mathcal{H}$ of VC-dimension $d$ in order to learn (with constant probability) a classifier of risk $\e$ it suffices to use $O\left(\frac{d}{\e} \right)$ samples. This result is tight in a sense that there exist hypothesis classes and distributions for which that many samples are necessary. Our main technical result guarantees that in the EQ-learning model $O(d \cdot \text{polylog}(1/\epsilon))$ many queries suffice. This is an exponential improvement. Why is this possible?

Imagine that there is a ground truth $g \in \mathcal{H}$ that you try to learn and that you already found $h \in \mathcal{H}$ such that $R_{\mathcal{D},g}(h) \leq \eta$. Then the counterexample oracle $\text{EQ}_{\mathcal{D}}(h,\cdot)$ provides you with samples from a distribution $\mathcal{D}|_{h \neq g}$. Querying the oracle $O(d)$ times you get a sample $S \xleftarrow{} \text{EQ}_{\mathcal{D}}(h,O(d))$. You can now use the PAC bound: if you find an $h' \in \mathcal{H}$ that is consistent with $S$ then you know that it has an error of at most $1/2$ on $\mathcal{D}|_{h \neq g}$. It is then natural to define
\[ \text{Combine}(h,h')(x) = \begin{cases} h(x), & \mbox{if } x \in h = g \\ h'(x), & \mbox{if } x \in h \neq g \end{cases} \text{.}\]
Note that $\text{Combine}(h,h')$ has an error of at most $\eta/2$. If we repeated this procedure $\log(1/\e)$ times, thus asking $O(d \log(1/\e))$ queries, you would find a classifier with error $\e$. 

Unfortunately it is not possilbe to compute $\text{Combine}(h,h')$. Afterall, if you knew the region of the input space where $h \neq g$ then you could just flip the prediction of $h$ in that region and the resulting classifier would have zero error. But it turns out that the general intuition of decreasing the error by a multiplicative factor after every $O(d)$ queries can indeed be achieved. The key to this result is to find a computable version of $\text{Combine}(h,h',...)$ that guarantees an exponentially fast decay of the error.

Our algorithm is mainly inspired by boosting techniques, most notably by an approach from \citet{eqboosting}. In this work the authors consider capabilities of polynomially bounded learners. 
If we assume that the "complexity" of the hypothesis class $\mathcal{H}$ is measured by its VC-dimension then
the relevant result from \citet{eqboosting} can be summarized as follows. If we have an algorithm $\textbf{L}^{\text{PAC}}$ that learns $\mathcal{H}$ to a constant error in time (the authors focus on time but the time is of course an upper-bound for the sample complexity) $\text{time}(d)$ then this algorithm can be boosted to an algorithm $\textbf{L}^{\text{EQ}}$ that learns $\mathcal{H}$ in the EQ-model to an error $\frac{1}{\omega(\text{poly}(d))}$ in time $\text{poly}(\text{time}(d))$. 

Let us apply this boosting technique to our setting. Assume that $\textbf{L}^{\text{PAC}}$ runs in time $O(d)$ (as this is the number of samples that are required by the standard PAC bound to learn to constant error). Then the boosting algorithm can be used to produce $\textbf{L}^{\text{EQ}}$ that learns $\mathcal{H}$ to error $\e$. What bound on the run time of $\textbf{L}^{\text{EQ}}$ do we get?  Unfortunately this bound is no better than $O \left(\frac{d}{\e} \right)$. This is exactly what the standard PAC bound provides in the first place. Thus, disappointingly, a direct application of these ideas do not yield a benefit in using the EQ-model versus using the PAC-model. 

To get the claimed exponential separation between the two models we develop a boosting-like algorithm that differs significantly in several important aspects and hence also requires a different proof technique. The main idea is to compute $h_1, \dots, h_{O(\text{polylog}(1/\e))} \in \mathcal{H}$ in a sequential manner and then to define the final hypothesis as a version of a majority vote of these functions. 

The simplified, high level, structure of the algorithm is as follows. Repeat the following for $t = O(\text{polylog}(1/\e))$ steps: at step $i$ ask the oracle for $S_i \xleftarrow{} \text{EQ}_{\mathcal{D}}(\text{"Majority"}(h_1,\dots,h_{i-1}), O(d))$ and then define $h_i := \text{FindConsistent}(S_i,\mathcal{H})$ (FindConsistent returns a function from $\mathcal{H}$ that agrees with all samples from $S_i$). At the end return $\text{"Majority"}(h_1, \dots, h_{t})$. To make this approach work we need to ensure that the error sets of $h_1, \dots, h_{t}$ are sufficiently independent. 

The following points need particular attention. First, note that the $\text{EQ}_{\mathcal{D}}$ provides the algorithm with samples from the error set only. Thus, correctly classified points at one stage will not automatically remain correctly classified at later stages. To make sure this happens, at every step $i$ we include in the training set $S_i$ samples from carefully chosen regions of the feature space that are already classified correctly (this is done in the inner "for" loop of the algorithm). Second, a simple majority vote of the previously constructed classifiers is not sufficient to get the desired result. This is true since a non-negligible region of the feature space might become incorrectly classified with higher and higher confidence by such a majority vote. This is the reason we clip the values of votes to a fixed interval (for details see Definition~\ref{def:majority}).

We start now with the formal definition of the algorithm, followed by its proof.

\subsection{The Algorithm}

First we give formal definitions of the concepts used in the algorithm. All proofs are deferred to the appendix.

\begin{algorithm}[tb]
   \caption{EQlearner}
   \label{alg:learner}
\begin{algorithmic}
   \STATE {\bfseries Input:} hypothesis class $\mathcal{H}$ of VC-dimension $d$, target error $\epsilon$, target confidence $\delta$, Equivalence query oracle $EQ_{\mathcal{D}}$.
   \STATE
   \STATE 
   $\e' := \frac{\e}{10^5 \log^4(1/\e)}$
   \STATE $B_{\e'} := 2\ceil{\log(1/\e')} + 1$
   \STATE $m := O \left((d + \log(B_{\e'}^4) + \log(1/\delta)) \cdot B_{\e'}^4 \right)$
   \STATE $t := O \left(B_{\e'}^3 \right)$
   \STATE $h_1 \in \mathcal{H}$
   \FOR{$i=2$ {\bfseries to} $t$}
        \STATE $S_i := 
        \text{EQ}_{\mathcal{D}}(\text{Maj}(h_1,\dots,h_{i-1}),m)$
        \FOR{$v \in [B_{\e'}] \cap 2\Z + 1$}
            \STATE $h' := \text{Maj}(h_1,\dots,h_{i-1}) \oplus [\text{Vote}(h_1,\dots,h_{i-1}) \in \{v,-v\}]$
            \STATE $S_i^v := \text{EQ}_{\mathcal{D}}(h',m)$
        \ENDFOR
        \STATE $h_i := \text{FindConsistent} \left(S_i \cup \bigcup_{v \in [B_{\e'}]} S_i^v, \mathcal{H} \right)$
    \ENDFOR
   \STATE
   \STATE {\bfseries Return} $\text{Maj}(h_1,h_2,\dots,h_{t})$
\end{algorithmic}
\end{algorithm}

\begin{definition}[Vote and Majority]\label{def:majority}
For $\e \in (0,1)$ we define $B_\e := 2\ceil{\log(1/\e)}+1$ and $\text{clip}_{\e} : \Z \xrightarrow{} \Z$ as: 
$$\text{clip}_{\e}(x) : = \min(\max(-B_\e,x), B_\e)\text{.} $$
For a sequence of functions $h_1,\dots,h_i : X \xrightarrow{} \{-1,+1\}$, $\e \in (0,1)$ and $x \in X$, we define $\text{Vote}(h_1,\dots,h_i)(x)$ recursively as: 
\begin{align*}
&\text{Vote}(h_1,\dots,h_i)(x) := \\ &\text{clip}_{\e} \left(\text{Vote}(h_1,\dots,h_{i-1})(x) + 2h_i(x) \right) \text{,}\\
&\text{Vote}(h_1)(x) := h_1(x) \text{.}    
\end{align*} 
Similarly, for a sequence of functions $h_1,\dots,h_i : X \xrightarrow{} \{-1,+1\}$, $\e \in (0,1)$ and a ground truth function $g$ we define $\text{Vote}_g(h_1,\dots,h_i)(x)$ recursively as:
\begin{align*}
&\text{Vote}_g(h_1,\dots,h_i)(x) := \\ &\text{clip}_{\e} \left(\text{Vote}(h_1,\dots,h_{i-1})(x) + 2 \cdot (-1)^{h_i(x) = g(x)} \right) \text{,}\\
&\text{Vote}_g(h_1)(x) := (-1)^{h_1(x) = g(x)} \text{.}    
\end{align*} 

\paragraph{Note.} $\text{Vote}(h_1,\dots,h_i)(x)$ expresses our current estimate for a particular input (together with a level of confidence), whereas $\text{Vote}_g(h_1,\dots,h_i)(x)$ denotes the error of this current estimate with respect to the ground truth.

Finally, we define: 
\[ \text{Maj}(h_1, \dots, h_i)(x) := \begin{cases} +1, & \mbox{if } \text{Vote}(h_1,\dots,h_i)(x) \geq 0, \\ -1, & \mbox{otherwise.} 
\end{cases}\]
\end{definition}

\begin{observation}
Observe that for all $i \in \N$, $h_1, \dots, h_i,g : X \xrightarrow{} \{-1,+1\}$ and $x \in X$ we have:
$$\text{Vote}(h_1,\dots,h_i)(x) \in 2\Z+1 \cap [-B_\e , B_\e] \text{ and}$$
$$\text{Vote}(h_1,\dots,h_i)(x) = \pm \text{Vote}_g(h_1,\dots,h_i)(x) \text{.}$$
\end{observation}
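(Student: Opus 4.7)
The plan is to prove both assertions by simultaneous induction on $i$, with the second one strengthened to the exact identity
$$\text{Vote}(h_1,\dots,h_i)(x) \;=\; -g(x)\cdot\text{Vote}_g(h_1,\dots,h_i)(x),$$
which immediately yields the $\pm$ equality claimed in the statement.

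For assertion (1), the base case is immediate since $\text{Vote}(h_1)(x) = h_1(x) \in \{-1,+1\}$ is odd and lies in $[-B_\e,B_\e]$ (note $B_\e\ge 1$). For the inductive step, if $\text{Vote}(h_1,\dots,h_{i-1})(x)$ is an odd integer, then adding $2h_i(x)\in\{-2,+2\}$ preserves oddness. The key point is that $B_\e = 2\lceil\log(1/\e)\rceil+1$ is itself odd, so clipping an odd integer to the symmetric interval $[-B_\e,B_\e]$ still produces an odd integer in that interval.

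For assertion (2), I would first record the pointwise algebraic identity
$$(-1)^{[h_i(x)=g(x)]} \;=\; -h_i(x)\,g(x),$$
verified by checking the four cases in $\{-1,+1\}^2$. The base case then follows from $-g(x)\cdot\text{Vote}_g(h_1)(x) = -g(x)\cdot(-h_1(x)g(x)) = h_1(x)g(x)^2 = h_1(x) = \text{Vote}(h_1)(x)$. For the inductive step, assume the identity at step $i-1$. Substituting and using $g(x)^2=1$ gives
$$\text{Vote}(h_1,\dots,h_{i-1})(x) + 2h_i(x) \;=\; -g(x)\bigl[\text{Vote}_g(h_1,\dots,h_{i-1})(x) - 2h_i(x)g(x)\bigr].$$
Since $[-B_\e,B_\e]$ is symmetric about $0$, the function $\text{clip}_\e$ is odd, so the factor $-g(x)\in\{-1,+1\}$ can be pulled outside the clip, yielding
$$\text{Vote}(h_1,\dots,h_i)(x) \;=\; -g(x)\cdot\text{clip}_\e\!\bigl(\text{Vote}_g(h_1,\dots,h_{i-1})(x) - 2h_i(x)g(x)\bigr) \;=\; -g(x)\cdot\text{Vote}_g(h_1,\dots,h_i)(x).$$

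There is no real obstacle here; the proof is a short verification. The two subtle points worth flagging are (a) that $B_\e$ is chosen odd precisely so that clipping preserves the odd-integer structure needed in assertion (1), and (b) that $\text{clip}_\e$ is an odd function because its target interval is symmetric about $0$, which is exactly what lets the $\pm$ sign be pulled through the clip in assertion (2).
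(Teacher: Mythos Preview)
Your proof is correct. The paper states this as an Observation without proof, so there is no argument to compare against; your simultaneous induction, with the strengthening $\text{Vote}(h_1,\dots,h_i)(x) = -g(x)\cdot\text{Vote}_g(h_1,\dots,h_i)(x)$ and the use of the oddness of $\text{clip}_\e$, is exactly the natural verification. (Note that you are tacitly reading the recursive clause for $\text{Vote}_g$ as depending on $\text{Vote}_g(h_1,\dots,h_{i-1})$ rather than $\text{Vote}(h_1,\dots,h_{i-1})$ as literally printed; this is clearly the intended meaning, since the paper calls the definition ``recursive'' and the observation would fail otherwise.)
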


\subsection{Proof of correctness}


\begin{restatable}{lemma}{lempositivegoesdown}
\label{lem:34ofpositivegoesdown}
For every $i \in \N$, for every $h_1, \dots, h_{i-1} \in \mathcal{H}$, for every ground truth $g \in \mathcal{H}$ and for every $\delta \in (0,1)$: if $m = \Omega(d + \log(1/\delta))$ then with probability $1-\delta$ every function $h \in \mathcal{H}$ that is consistent with $S \sim \text{EQ}_{\mathcal{D}}(\text{Maj}(h_1,\dots,h_{i-1}),m)$ satisfies the following:
\begin{align*}
&\mathbb{P}_{x \sim \mathcal{D}}[\text{Maj}(h_1,\dots,h_{i-1})(x) \neq g(x) \wedge h(x) \neq g(x) ] \\
&\leq \frac{1}{16}\mathbb{P}_{x \sim \mathcal{D}}[\text{Maj}(h_1,\dots,h_{i-1})(x) \neq g(x)].
\end{align*}
\end{restatable}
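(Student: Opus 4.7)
The plan is to reduce the statement to a single application of the standard realizable PAC bound on the conditional distribution $\mathcal{D}' := \mathcal{D}|_{E}$, where $E := [\text{Maj}(h_1,\dots,h_{i-1}) \neq g]$ is the error set of the current majority. First I would observe that, by definition, $\text{EQ}_{\mathcal{D}}(\text{Maj}(h_1,\dots,h_{i-1}))$ returns exactly samples from $\mathcal{D}'$, each correctly labeled by $g$. Since $g \in \mathcal{H}$, the learning problem restricted to $\mathcal{D}'$ is realizable by $\mathcal{H}$, so the VC-dimension of the relevant class remains at most $d$. (If $\Pr_{x \sim \mathcal{D}}[x \in E] = 0$ the oracle returns ``YES'' and the desired inequality is trivial $0 \leq 0$, so assume $\Pr[E] > 0$.)

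Next I would invoke the standard realizable PAC bound: there is a universal constant $c$ such that, for $m \geq c(d + \log(1/\delta))$, with probability at least $1-\delta$ over $S \sim (\mathcal{D}')^m$, every $h \in \mathcal{H}$ consistent with $S$ satisfies
\[
\Pr_{x \sim \mathcal{D}'}\bigl[h(x) \neq g(x)\bigr] \leq \tfrac{1}{16}.
\]
Here I would just pick the constant in the $\Omega(\cdot)$ large enough so the target error $1/16$ (a constant) is achieved; this is where the PAC upper bound of $O((d+\log(1/\delta))/\epsilon)$ for realizable learning is applied with $\epsilon = 1/16$.

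Finally, translating back to $\mathcal{D}$ via the identity
\[
\Pr_{x \sim \mathcal{D}}\bigl[x \in E \wedge h(x) \neq g(x)\bigr] = \Pr_{x \sim \mathcal{D}}[x \in E] \cdot \Pr_{x \sim \mathcal{D}'}\bigl[h(x) \neq g(x)\bigr],
\]
combined with the bound above, immediately yields the claimed inequality. The only real obstacle is choosing the constant inside the $\Omega(\cdot)$ correctly so the PAC bound delivers precisely the factor $1/16$; everything else is a clean conditioning argument. I would write the proof in this order: (i) set up $\mathcal{D}'$ and note realizability and VC-dimension; (ii) apply the standard PAC bound at constant error $1/16$ with confidence $1-\delta$; (iii) unconditionalize to finish.
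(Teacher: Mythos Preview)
Your proposal is correct and matches the paper's own proof essentially line for line: the paper also observes that $\text{EQ}_{\mathcal{D}}(\text{Maj}(h_1,\dots,h_{i-1}),m)$ yields i.i.d.\ samples from $\mathcal{D}|_{\text{Maj}(h_1,\dots,h_{i-1})\neq g}$, invokes the realizable PAC bound (their Lemma~\ref{lem:standardPAC}) at constant error $1/16$, and notes that this is equivalent to the claimed inequality. Your version is in fact slightly more careful, explicitly handling the degenerate case $\Pr[E]=0$ and writing out the unconditioning identity.
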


In words, the lemma states that if we get $m$ samples and take any function in the hypothesis class that is consistent with those samples, this function will be incorrect at most on a fraction $1/16$ of the error set of our current estimate. Note that the required "independence" of new functions in our boosting-like algorithm is partially satisfied by this statement.


\begin{restatable}{lemma}{lemifbigthanthreshold}
\label{lem:ifbiggerthanthreshgoesdown}
For every $\e' \in (0,1)$, $i \in [t]$, $h_1, \dots, h_{i-1} : X \xrightarrow{} \{ -1,+1\}$, ground truth $g \in \mathcal{H}$ and $v \in [B_{\e'}] \cap 2\Z+1$ if: 
\begin{align*}
&\mathbb{P}_{x \sim \mathcal{D}}[\text{Vote}_g(h_1,\dots,h_{i-1})(x) = -v] \\
&\geq \frac{1}{B_{\e'}^4} \mathbb{P}_{x \sim \mathcal{D}}[\text{Maj}(h_1,\dots,h_{i-1})(x) \neq g(x)]
\end{align*}
then for $m = \Omega((d+\log(1/\delta))B_{\e'}^4)$ we have that with probability $1-\delta$ every function $h \in \mathcal{H}$ that is consistent with $\text{EQ}_{\mathcal{D}}(h',m)$, where $h' := \text{Maj}(h_1,\dots,h_{i-1}) \oplus [\text{Vote}(h_1,\dots,h_{i-1}) \in\{v,-v\}]$ satisfies the following:
\begin{align*}
&\mathbb{P}_{x \sim \mathcal{D}}[\text{Vote}_g(h_1,\dots,h_{i-1})(x) = -v  \wedge h(x) \neq g(x) ] \\
&\leq \frac{1}{16}\mathbb{P}_{x \sim \mathcal{D}}[\text{Vote}_g(h_1,\dots,h_{i-1})(x) = -v]
\end{align*}
\end{restatable}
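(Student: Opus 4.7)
The plan is to reduce the lemma to the standard realizable PAC bound applied to the modified hypothesis $h'$. The key preliminary observation, obtainable by induction on the recursive definition, is that $\text{Vote}_g(h_1,\dots,h_{i-1})(x) = -g(x) \cdot \text{Vote}(h_1,\dots,h_{i-1})(x)$. Consequently the event $\{\text{Vote} \in \{v,-v\}\}$ coincides with $\{\text{Vote}_g \in \{v,-v\}\}$, and $\text{Maj} \neq g$ iff $\text{Vote}_g > 0$. Writing $A := \{x : \text{Vote}(h_1,\dots,h_{i-1})(x) \in \{v,-v\}\}$, the hypothesis $h' = \text{Maj} \oplus A$ agrees with $\text{Maj}$ off $A$ and negates it on $A$. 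A case split on whether $x \in A$ and on the sign of $\text{Vote}_g(x)$ then gives the clean description
$$\{h' \neq g\} \;=\; \{\text{Vote}_g = -v\} \;\cup\; \bigl(\{\text{Vote}_g > 0\} \setminus \{\text{Vote}_g = v\}\bigr),$$
and in particular $\{\text{Vote}_g = -v\} \subseteq \{h' \neq g\}$.

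Next I would apply realizable uniform convergence to the sample $\text{EQ}_{\mathcal{D}}(h', m)$. Since $g \in \mathcal{H}$, the oracle returns labels consistent with $g$ and the points are drawn i.i.d.\ from $\mathcal{D}|_{h' \neq g}$. Standard VC-theory for the realizable case guarantees that, with probability at least $1-\delta$, every $h \in \mathcal{H}$ consistent with the sample satisfies $\mathbb{P}_{x \sim \mathcal{D}|_{h' \neq g}}[h(x) \neq g(x)] \le \eta$, provided $m = \Omega\bigl((d + \log(1/\delta))/\eta\bigr)$. Rescaling to the unconditional distribution and using the inclusion from the previous step,
$$\mathbb{P}_{\mathcal{D}}[\text{Vote}_g = -v \wedge h \neq g] \;\le\; \mathbb{P}_{\mathcal{D}}[h' \neq g] \cdot \eta.$$

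The final step uses the lemma's hypothesis to bound $\mathbb{P}[h' \neq g]$ by a small multiple of $p := \mathbb{P}[\text{Vote}_g = -v]$. Letting $q := \mathbb{P}[\text{Vote}_g > 0, \text{Vote}_g \neq v]$, we have $\mathbb{P}[h' \neq g] = p + q$. Since $q \le \mathbb{P}[\text{Vote}_g > 0] = \mathbb{P}[\text{Maj} \neq g]$ and the hypothesis forces $p \ge \mathbb{P}[\text{Maj} \neq g]/B_{\e'}^4$, we get $q \le B_{\e'}^4 \cdot p$, hence $\mathbb{P}[h' \neq g] \le (1 + B_{\e'}^4) p$. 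Choosing $\eta = \Theta(1/B_{\e'}^4)$ — which by uniform convergence requires exactly $m = \Omega\bigl((d+\log(1/\delta))B_{\e'}^4\bigr)$ samples, matching the hypothesis — delivers $\eta \cdot \mathbb{P}[h' \neq g] \le p/16$, which is the desired inequality. The main obstacle is verifying the first step correctly: on $A$, flipping $\text{Maj}$ turns currently-correct points (those with $\text{Vote}_g = -v$) into errors of $h'$, which is precisely why querying $\text{EQ}_{\mathcal{D}}(h')$ delivers training data concentrated on the region we want to control; it is easy to get the sign bookkeeping wrong here, and that is the one place where a slip would break the reduction.
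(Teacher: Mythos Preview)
Your proof is correct. The core insight---the exact description of $\{h' \neq g\}$ as the disjoint union of $\{\text{Vote}_g = -v\}$ and $\{\text{Vote}_g > 0\}\setminus\{\text{Vote}_g = v\}$---matches the paper's, and your chain of inequalities is sound.

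Where you differ from the paper is in how the PAC bound is invoked. The paper argues in two stages: first it shows via Chernoff--Hoeffding that, since each oracle draw lands in $\{\text{Vote}_g = -v\}$ with probability at least $1/(2B_{\e'}^4)$, at least $m/(8B_{\e'}^4)$ of the $m$ samples fall in that region with high probability; then it applies the realizable PAC bound with target error $1/16$ to that sub-sample, viewed as i.i.d.\ draws from $\mathcal{D}|_{\text{Vote}_g = -v}$. It then union-bounds the Chernoff failure and the PAC failure. You instead apply the PAC bound once, directly on $\mathcal{D}|_{h' \neq g}$, with the smaller target error $\eta = \Theta(1/B_{\e'}^4)$, and then use the inclusion $\{\text{Vote}_g = -v\}\subseteq\{h'\neq g\}$ together with $\mathbb{P}[h'\neq g]\le (1+B_{\e'}^4)\,p$ to transfer the bound. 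Both routes require $m = \Omega((d+\log(1/\delta))B_{\e'}^4)$; yours is a bit cleaner since it folds the ``enough samples hit the region'' step into the choice of $\eta$ and avoids the separate concentration argument and union bound.
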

Recall that our classifier is based on a sequence of classifiers. Each of these classifiers casts a vote. Those votes are tallied and possibly clipped. The final classifier looks at the sign of the vote count. We can think of the vote count as the "confidence" we have in the particular decision.  Consider all the points in the feature space that have a particular vote count. Assume that this vote count is negative (correct decision) and that this particular vote count has a large probability mass. The lemma then states the following. If we get $m$ further samples and take any function in the hypothesis class that is consistent with those samples, then this function will be incorrect at most on a fraction $1/16$ of the points with this particular vote count.

Next we define an abstract process on odd integers. This process will emulate how a collection of the following probabilities evolves throughout the execution of the algorithm. For iteration $t$ of the algorithm, and a vote value $i \in 2\Z + 1$ we think that $p_i^t$ (which is defined below) is equal to $\mathbb{P}_{x \sim \mathcal{D}}[\text{Vote}_g(h_1,\dots,h_{i-1})(x) = i]$. The two properties defined in Definition~\ref{def:process} correspond to Lemma~\ref{lem:34ofpositivegoesdown} and Lemma~\ref{lem:ifbiggerthanthreshgoesdown}. We refer the reader to Figure~\ref{fig:voting} for a visual representation of the rules of the process. The values $\{p_i\}_{i \in I_\e}$ are arranged on a line, each $p_i$ corresponds to one rectangle. The horizontal dashed line represents the threshold at which the second property from Definition~\ref{def:process} is triggered. The left/right arrows and the values next to them represent how much mass is moved to the left and to the right from a given position.

\begin{figure*}
  \centering
  \includegraphics[width=1\textwidth]{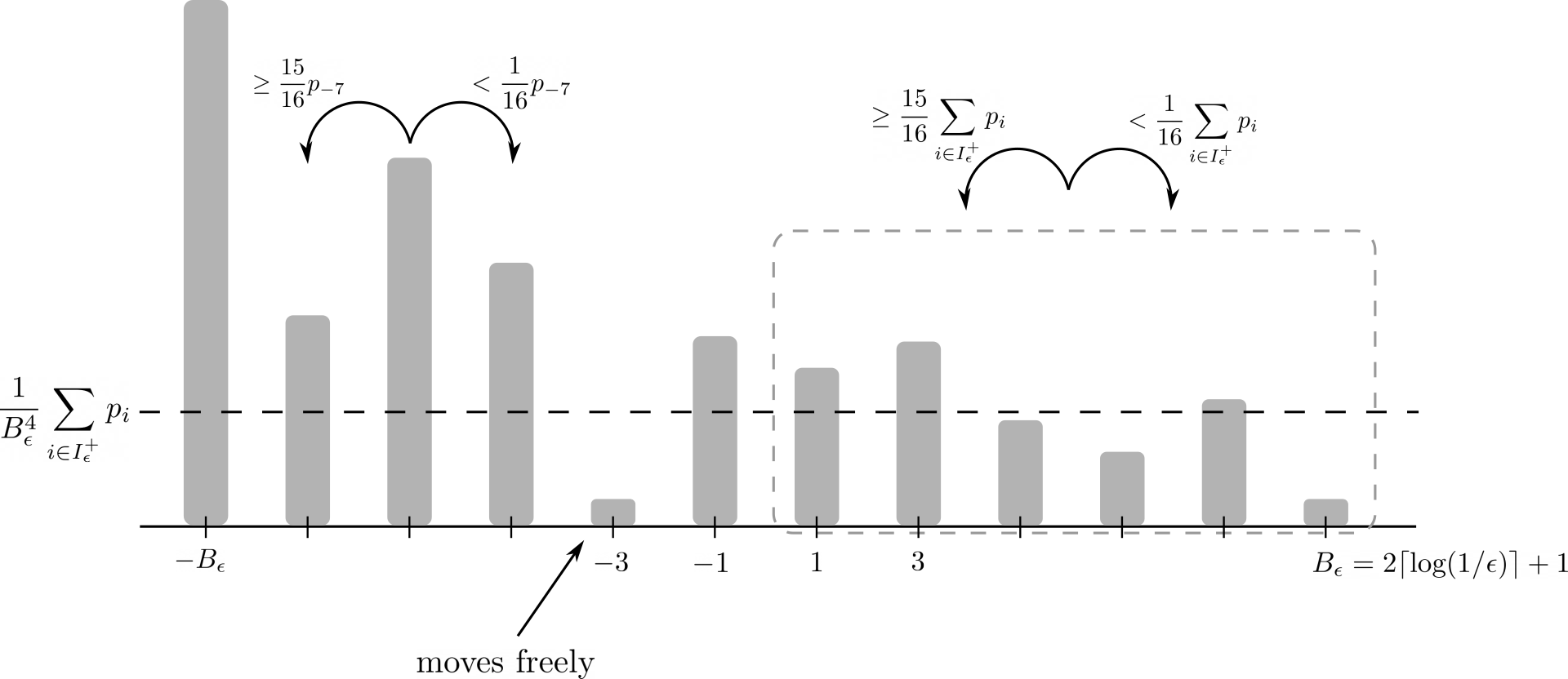}
  \caption{Visualization of the process.}
  \label{fig:voting}
\end{figure*}

\begin{definition}[Process on $2\Z+1$]\label{def:process}
For every $\e \in (0,1)$ we define a process on $I_\e := 2\Z+1 \cap [-B_\e, B_\e]$. For simplicity we introduce the notation $I_{\e}^+ := I_\e \cap (\Z > 0), I_{\e}^- := I_\e \cap (\Z < 0)$. For every $i \in I_\e$ and $t \in \N$, there is a value $p_i^t$ associated with a point $i$ at time step $t$. The process starts from an initial configuration  $\{ p_i^1\}_{i \in I_{\e}}$, such that $\sum_{i \in I_{\e}} p_i^1 = 1$. For step $t \in \N$ and for every $i \in I_{\e}$ the weight $p_i^t$ is split into two parts: a part of $p_i^t$ moves to $i-2$ and the remaining part moves to $i+2$. More precisely, this is done in the following manner:
\begin{itemize}
    \item At every step $t$ at least $\frac{15}{16} \sum_{i \in I_\e^+} p_i^t$ of the mass, i.e., at least $\frac{15}{16}$ of the mass on $I_{\e}^+$, moves down.
    
    \item At every step $t$ and for every $i \in I_\e^+$, if $$p_i^t \geq \frac{1}{B_\e^4} \sum_{i \in I_\e^+} p_i^t$$
    then at most $\frac{1}{16} p_i^t$ of the weight from $p_i^t$ moves to $i+2$.
\end{itemize}
If some mass moved to $-B_\e-2$ or $B_\e+2$ then it is moved back to $-B_\e$ and $B_\e$, respectively.
\end{definition}


According to Definition~\ref{def:process}, as long as there is any "substantial" mass on a position $i < 0$, at least $15/16$ of this mass has to move two positions down and at most $1/16$ can move two positions up. Moreover $15/16$ of the mass on $i > 0$ has to move down. It is therefore intuitively not surprising that we expect less and less mass to be found on the positive part and the process continues. Lemma~\ref{lem:processconverges} makes this intuition quantitative.

\begin{restatable}{lemma}{lemprocessconverges}
\label{lem:processconverges}
Let $\e \in \left(0,\frac{1}{32} \right)$ and consider an initial configuration $\{p_i^1\}_{i \in I_\e}$ such that $\sum_{i \in I_\e} p_i^1 = 1$. Then after $t = O(B_\e^3)$ steps of the process
$$\sum_{i \in I_{\e}^+ } p_i^t \leq 64 \cdot \e \cdot B_\e^3 \text{.}$$
\end{restatable}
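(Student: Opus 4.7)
The plan is to analyze the process via a Lyapunov-type potential of the form
$$\Phi^t \;:=\; \sum_{i \in I_\e} c^{\,i}\, p_i^t$$
for a parameter $c > 1$ to be chosen, show a one-step contraction $\Phi^{t+1} \leq \alpha\,\Phi^t$ for some $\alpha < 1$, and iterate for $t = O(B_\e^3)$ steps. A direct calculation using the mass conservation $d_i^t + u_i^t = p_i^t$ (where $d_i^t,u_i^t$ are the fractions of $p_i^t$ moving to $i-2$ and $i+2$ respectively) and regrouping yields
$$\Phi^{t+1} \;=\; c^{-2}\,\Phi^t \,+\, (c^2 - c^{-2})\sum_{i \in I_\e} u_i^t\, c^{\,i} \,+\, \Delta^t_{\mathrm{refl}},$$
where $\Delta^t_{\mathrm{refl}}$ collects the corrections from reflection at $\pm B_\e$.

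To control $\sum_i u_i^t c^{\,i}$, I would partition $I_\e$ into ``big'' positions (those with $p_i^t \geq P^t/B_\e^4$, where $P^t := \sum_{i \in I_\e^+} p_i^t$) and ``small'' ones. On big positions the second rule of the process forces $u_i^t \leq p_i^t/16$, so their total contribution to $\sum u_i^t c^{\,i}$ is at most $\Phi^t/16$. On small positions we only have $u_i^t \leq p_i^t < P^t/B_\e^4$, and since $|I_\e| = B_\e + 1$ their combined contribution is at most $O(P^t\, c^{B_\e}/B_\e^3)$. Choosing $c$ close to one, for instance $c = 1 + \Theta(\log B_\e / B_\e)$ so that $c^{B_\e} = \Theta(B_\e)$, a short Taylor expansion shows $c^{-2} + (c^2 - c^{-2})/16 = 1 - \Omega(\log B_\e / B_\e)$, while the small-mass error shrinks to $O(P^t \log B_\e / B_\e^3)$. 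Absorbing this additive error into a multiplicative one via $P^t \leq \Phi^t/c \leq \Phi^t$, the one-step inequality reads
$$\Phi^{t+1} \;\leq\; \Bigl(1 - \Omega\bigl(\tfrac{\log B_\e}{B_\e}\bigr)\Bigr)\Phi^t.$$
Starting from $\Phi^1 \leq \sum_i c^{\,i} = O(c^{B_\e}) = O(B_\e)$, iterating for $t = \Theta(B_\e^3)$ steps drives $\Phi^t$ far below $c \cdot 64\,\e\,B_\e^3$, and the target bound $P^t \leq \Phi^t/c \leq 64\,\e\,B_\e^3$ follows.

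The main obstacle I anticipate is the interaction between the \emph{aggregate} bound of Rule~1 on positive positions (only $\sum_{i > 0} u_i^t \leq P^t/16$) and the \emph{per-position} nature of Rule~2 on big positions. If Rule~2 did not apply at big positive positions, an adversary could concentrate the entire $P^t/16$ upward motion at $i = B_\e$, injecting a term of order $c^{B_\e}\,P^t/16$ into $\sum u_i^t c^{\,i}$ and destroying the contraction; it is precisely because Rule~2 applies per-position on every big position (regardless of sign) that the clean bound $\Phi^t/16$ survives. The reflection term $\Delta^t_{\mathrm{refl}}$ is a secondary technical point: the upper reflection at $B_\e$ is favorable (mass that would escape past $B_\e$ is held there), while the lower one contributes at most $u_{-B_\e}^t \cdot c^{-B_\e}(c^2 - 1)$, negligible because $c^{-B_\e}$ is tiny and $u_{-B_\e}^t \leq p_{-B_\e}^t/16$ once $p_{-B_\e}^t$ exceeds the $P^t/B_\e^4$ threshold. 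What remains is routine bookkeeping of constants to ensure that all error terms fit within the $O(B_\e^3)$-step budget.
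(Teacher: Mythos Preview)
Your argument rests on the claim that Rule~2 gives $u_i^t \le p_i^t/16$ at every big position ``regardless of sign.'' That is not what the process guarantees: as the correspondence with Lemmas~\ref{lem:34ofpositivegoesdown} and~\ref{lem:ifbiggerthanthreshgoesdown} and the Type~1--4 case analysis in the paper's own proof make clear, the per-position bound of Rule~2 holds only on $I_\e^-$; on $I_\e^+$ the sole constraint is the \emph{aggregate} Rule~1. You yourself spell out the consequence. Put $15P^t/16$ of the positive mass at $i=1$ and $P^t/16$ at $i=B_\e$, send all of the former down and let the latter reflect. Rule~1 is satisfied, yet $\sum_{i>0} u_i^t c^{\,i} = c^{B_\e}P^t/16$ while $\sum_{i>0} p_i^t c^{\,i} = 15cP^t/16 + c^{B_\e}P^t/16$, so the ratio is essentially $1$ rather than $1/16$ and your one-step contraction collapses. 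The sentence ``it is precisely because Rule~2 applies per-position on every big position (regardless of sign)'' is the point where the proof breaks.

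There is a second obstruction, independent of the first. Mass conservation forces $\Phi^t \ge c^{-B_\e}$ for all $t$. With your choice $c = 1+\Theta(\log B_\e/B_\e)$ this floor is $\Theta(1/B_\e)$, yet to conclude $P^t \le 64\e B_\e^3$ via $P^t \le \Phi^t/c$ you need $\Phi^t \lesssim \e B_\e^3 \approx 2^{-B_\e/2}\cdot B_\e^3$, which is exponentially smaller than the floor; the potential simply cannot contract far enough. Taking $c$ constant (say $c=2$) lowers the floor to $\Theta(\e^2)$, but then the small-mass error $O(P^t c^{B_\e}/B_\e^3)$ becomes $O(2^{B_\e}P^t/B_\e^3)$ and again swamps the contraction. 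The paper resolves both tensions by abandoning a uniform exponential weight: it uses $W_t = \sum_{i<0} 2^{\,i} p_i^t + \sum_{i>0} p_i^t$, so the constant weight on $I_\e^+$ makes the aggregate Rule~1 sufficient (motion within $I_\e^+$ is neutral for $W_t$), while the base-$2$ weight on $I_\e^-$ keeps the floor at order $\e^2$. Because $W_t$ does not contract monotonically, it is paired with a second progress measure $M_t$ (the mean positive position) and a case split on the flux across the $\pm 1$ boundary.
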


To get the final result it is enough to to take the union bound over the failure events of Lemma~\ref{lem:34ofpositivegoesdown} and~\ref{lem:ifbiggerthanthreshgoesdown} and then apply Lemma~\ref{lem:processconverges}.

\begin{restatable}{theorem}{thmqc}
\label{thm:querycomplexity}
For every $\e \in \left(0,\frac{1}{32}\right), \delta \in (0,1)$, every hypothesis class $\mathcal{H}$ of VC-dimension $d$, for every distribution $\mathcal{D}$ we have that EQlearner (Algorithm~\ref{alg:learner}) run with parameters $\e, \delta, \mathcal{H}$ EQ-learns $\mathcal{H}$ asking 
$$O((d + \log(1/\delta))\log^9(1/\e)) \text{ queries.}$$
\end{restatable}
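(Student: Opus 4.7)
The plan is to show that the quantities $p_j^t := \mathbb{P}_{x \sim \mathcal{D}}[\text{Vote}_g(h_1,\dots,h_{t-1})(x) = j]$, indexed by $j \in I_{\e'}$ and by iteration $t$ of Algorithm~\ref{alg:learner}, form a valid trajectory of the abstract process in Definition~\ref{def:process}, and then to invoke Lemma~\ref{lem:processconverges} to bound the error of the returned classifier.

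First I would set up a single global union bound. At each of the $t = O(B_{\e'}^3)$ outer iterations, Lemma~\ref{lem:34ofpositivegoesdown} is invoked once for the sample $S_i$ and Lemma~\ref{lem:ifbiggerthanthreshgoesdown} is invoked at most $|[B_{\e'}] \cap (2\Z+1)| = O(B_{\e'})$ times, giving $O(t \cdot B_{\e'}) = O(B_{\e'}^4)$ probabilistic events total. Splitting the confidence as $\delta' := \delta/O(B_{\e'}^4)$, the choice $m = \Omega((d + \log(B_{\e'}^4) + \log(1/\delta)) B_{\e'}^4)$ meets both lemmas' sample-complexity hypotheses, so with probability $1-\delta$ all of the ``good'' conclusions hold throughout the run. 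Because $h_i$ is FindConsistent on the full union $S_i \cup \bigcup_v S_i^v$, it simultaneously witnesses the conclusion of Lemma~\ref{lem:34ofpositivegoesdown} and of Lemma~\ref{lem:ifbiggerthanthreshgoesdown} for every $v$.

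Next, conditional on this good event, I would verify the two rules of the process one by one. Note that $\sum_{j \in I_{\e'}^+} p_j^t = \mathbb{P}[\text{Maj}(h_1,\dots,h_{t-1}) \neq g]$, the support of $\text{EQ}_{\mathcal{D}}(\text{Maj}(h_1,\dots,h_{t-1}),m)$. For rule (i), Lemma~\ref{lem:34ofpositivegoesdown} applied to $S_i$ says $h_i$ disagrees with $g$ on at most $\tfrac{1}{16}$ of this error set, so on the remaining $\tfrac{15}{16}$ the quantity $\text{Vote}_g$ decreases by $2$, which is exactly the ``at least $\tfrac{15}{16}$ of the positive mass moves down'' condition (clipping at $-B_{\e'}$ is already absorbed into the clip operation of the process). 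For rule (ii), fix an odd $v \in [B_{\e'}]$ satisfying the threshold hypothesis $p_{-v}^t \geq B_{\e'}^{-4} \sum_{j>0} p_j^t$; the constructed $h' := \text{Maj}(h_1,\dots,h_{i-1}) \oplus [\text{Vote}(h_1,\dots,h_{i-1}) \in \{v,-v\}]$ is exactly the classifier whose error set, under the sign-flip identity $\text{Vote} = \pm \text{Vote}_g$, consists of the points where $\text{Vote}_g \in \{v,-v\}$, so $\text{EQ}_{\mathcal{D}}(h',m) = S_i^v$, and Lemma~\ref{lem:ifbiggerthanthreshgoesdown} bounds the mass of $\{\text{Vote}_g = -v \wedge h_i \neq g\}$ by $\tfrac{1}{16} p_{-v}^t$. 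These are exactly the points whose $\text{Vote}_g$ moves from $-v$ to $-v+2$, confirming rule (ii). The $v$'s below threshold need no guarantee because the threshold condition itself limits how much mass sits there.

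Then I apply Lemma~\ref{lem:processconverges} with parameter $\e'$: after $t = O(B_{\e'}^3)$ steps, $\sum_{j \in I_{\e'}^+} p_j^{t+1} \leq 64 \e' B_{\e'}^3$, and this equals $R_{\mathcal{D},g}(\text{Maj}(h_1,\dots,h_t))$ by the sign identity above. Plugging in $\e' = \e/(10^5 \log^4(1/\e))$ and $B_{\e'} = O(\log(1/\e))$ yields risk at most $\e$. Finally, the total query count is $t \cdot (1 + O(B_{\e'})) \cdot m = O(B_{\e'}^3) \cdot O(B_{\e'}) \cdot O((d + \log(1/\delta')) B_{\e'}^4)$, and after substituting $\log(1/\delta') = \log(1/\delta) + O(\log B_{\e'})$ and $B_{\e'} = O(\log(1/\e))$, this collapses to $O((d + \log(1/\delta)) \log^9(1/\e))$ queries.

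The step I expect to require the most care is the verification of rule (ii): one must check that the same function $h_i$ simultaneously satisfies the conclusion of Lemma~\ref{lem:ifbiggerthanthreshgoesdown} for every above-threshold $v$, using only that $h_i$ is consistent with the union $\bigcup_v S_i^v$; the remaining difficulties are bookkeeping for the union bound and confirming that the sign conventions in $\text{Vote}$, $\text{Vote}_g$, and the definition of $h'$ align so that $\text{EQ}_{\mathcal{D}}(h',\cdot)$ really samples from the correct vote-level set.
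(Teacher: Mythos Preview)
Your proposal is correct and follows essentially the same approach as the paper: set up a union bound over all $O(B_{\e'}^4)$ invocations of Lemmas~\ref{lem:34ofpositivegoesdown} and~\ref{lem:ifbiggerthanthreshgoesdown}, verify that the vote-level probabilities $p_j^t$ form a valid trajectory of the process in Definition~\ref{def:process}, apply Lemma~\ref{lem:processconverges} with parameter $\e'$, and tally queries. One minor inaccuracy in your exposition (not in the logic): the error set of $h'$ is not $\{\text{Vote}_g \in \{v,-v\}\}$ but rather the symmetric difference $\{\text{Vote}_g > 0\} \triangle \{\text{Vote}_g \in \{v,-v\}\} = \{\text{Vote}_g = -v\} \cup \{\text{Vote}_g > 0,\ \text{Vote}_g \neq v\}$; this does not affect your argument since $S_i^v = \text{EQ}_{\mathcal{D}}(h',m)$ by definition in the algorithm and Lemma~\ref{lem:ifbiggerthanthreshgoesdown} already carries out the correct analysis internally.
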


\paragraph{Note.} Optimizing the power on $\log(1/\e)$ in the query upper-bound was not our priority. We focused on simplicity of the algorithm and clarity of the proof. We believe that one can improve the analysis to get a tighter bound. We also think that one would have to come up with a new algorithm to prove that the query complexity of the EQ-model belongs to $o(d \cdot \log^2 (1/\e))$.

Our main result, Theorem~\ref{thm:ip} is an easy consequence of Theorem~\ref{thm:querycomplexity}.

\section{EQ-learner as a Booster}\label{sec:interactiveproofs}

Guarantees based on the VC-dimension are often not tight and as our result is phrased in these terms one might wonder how much it depends on this specific measure. As mentioned, our algorithm can be understood as a boosting technique and hence the result applies more generally. Next, we explain what we mean by that.


Let $\mathcal{H}$ be a hypothesis class and $\mathcal{D}$ be a distribution. Imagine that we have an algorithm $\mathcal{A}$ that for some distributions learns $\mathcal{H}$. What we mean is that $\mathcal{A}$ is a "PAC-learner" for $\mathcal{H}$ but only for a class of distributions $\mathfrak{D}$. Then imagine that we use $\mathcal{A}$ as a subroutine in the EQ-learning algorithm. I.e., instead of following the template of $S := \text{EQ}_{\mathcal{D}}(f,m), h := \text{FindConsistent}(S, \mathcal{H})$ (as in Algorithm~\ref{alg:learner}) we use $\mathcal{A}$ to get an $h$ that has a small error on distribution $\text{EQ}_{\mathcal{D}}(f)$. Now assume that all distributions $\text{EQ}_{\mathcal{D}}(f)$ for which $\mathcal{A}$ is run belong to $\mathfrak{D}$.

In this case a slight extension of Theorem~\ref{thm:querycomplexity} shows that we can boost $\mathcal{A}$ in the following sense. Assume that for every $\e$, $\delta \in (0,1)$, every $\mathcal{D}' \in \mathfrak{D}$ $\mathcal{A}$ learns $\mathcal{H}$ on $\mathcal{D}'$ in $Q_{\mathcal{A}}(\mathcal{H},\e,\delta)$ number of samples. Then there exists an EQ-learner (this is Algorithm~\ref{alg:learner}, which uses $\mathcal{A}$ as a subroutine) that learns $\mathcal{H}$ up to error $\e$ in number of queries upper-bounded by 
$$Q_{\mathcal{A}} \left(\mathcal{H},\frac{1}{16},\frac{\delta}{\text{polylog}(1/\e)} \right) \cdot \text{polylog}(1/\e) \text{.}$$ 
Now observe that if
\begin{equation}\label{eq:boostingassumption}
\frac{Q_{\mathcal{A}} \left(\mathcal{H},\e,\delta \right)}{Q_{\mathcal{A}} \left(\mathcal{H},\frac{1}{16},\frac{\delta}{\text{polylog}(1/\e)} \right)} \gg \text{polylog}(1/\e) \text{,}
\end{equation}
then the constructed EQ-learner learns $\mathcal{H}$ to error $\e$ with fewer queries than $\mathcal{A}$ does in the PAC-model. This is a different type of separation result. In words it says that in some cases you can boost an algorithm from the PAC-model to the EQ-model such that fewer queries are required. The condition from \eqref{eq:boostingassumption} in words means that the dependence of the runtime of $\mathcal{A}$ on $\e$ grows faster than $\text{polylog}(1/\e)$. This is a reasonable assumption as in the PAC-learning model for every hypothesis one needs $\Omega(1/\e)$ samples just to see a single point from the error set of this hypothesis. This suggests that the dependence of $Q_{\mathcal{A}} \left(\mathcal{H},\e,\delta \right)$ on $\e$ might grow like $\Omega(1/\e)$ (which is exactly what happens in the standard PAC-bound). To summarize, even in the cases when the VC-theory is far from reality one can still hope to get interesting results using our technique.

\section{Conclusions and Open Problems}

We study the interplay between attacks, defenses and learnability in the context of adversarial robustness. 

We start from the main lessons learned from past experimental and theoretical work on this topic, namely that for models to be secure in real-world applications we need to remove most of the restrictions on allowed perturbations and that some limitation on the power of the adversary is likely necessary to achieve any kind of guarantees. 

We ask whether, rather than fighting the adversary, one can use his power to evolve a given learning scheme to become increasingly robust. We  then introduce a learning setting where such a program can indeed be carried out. 

The core technical contribution on which our result is based is an exponential separation between the PAC-learning and the EQ-learning.

Even though our result still falls short of providing a provable defense in real-world settings, it has many of the properties that we believe such a system ought to have and we hope that it provides a blue print of how such a goal might be achieved. One possible recipe to achieve this goal could be as follows: 
\begin{enumerate}
    \item Show a separation between the PAC and EQ learning for the class of learning problems you are interested in.
    \item Our result guarantees that adversarial examples are distinguishable from mere errors.
   \item Design a distinguisher for a pair of distributions to detect adversarial examples.
\end{enumerate} 



Apart from the ambitious goals mentioned above our work poses also some theoretical open problems. The query complexity upper-bound in the EQ-model we were able to prove is of the form $O \left(d \cdot \log^9 \left(1/\e \right) \right)$. As we mentioned before this is not optimal and what we believe to be the true query complexity is $O(d \cdot \log(1/\e))$. Proving an upper or a lower-bound close to this expression is an interesting theoretical challenge.

\bibliography{example_paper}
\bibliographystyle{icml2021}


\newpage

\appendix
\section{Proofs}

We start by recalling the standard PAC upper-bound for the sample complexity of learning in the realizable case. 

\begin{lemma}\label{lem:standardPAC}
For every hypothesis class $\mathcal{H}$ of VC-dimension $d$ we have that for every $\e,\delta \in (0,1)$ $\mathcal{H}$ is PAC-learnable using the FindConsistent algorithm with sample complexity:
$$\frac{d \log(1/\e) + \log(1/\delta)}{\e}\text{.} $$
\end{lemma}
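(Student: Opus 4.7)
The plan is to reduce the statement to the Haussler--Welzl $\varepsilon$-net theorem, which is the standard tool for turning VC-dimension bounds into sample-complexity bounds for the realizable case. Fix the ground truth $g \in \mathcal{H}$ (unknown to the learner) and consider the range space $(X, \mathcal{R})$ where $\mathcal{R} = \{\{x : h(x) \neq g(x)\} : h \in \mathcal{H}\}$. Since $\mathcal{R}$ is obtained by pointwise XOR-ing every element of $\mathcal{H}$ with the fixed function $g$, its VC-dimension equals that of $\mathcal{H}$, namely $d$. Call a set $S \subseteq X$ an $\varepsilon$-net for $(X,\mathcal{R},\mathcal{D})$ if it intersects every range $R \in \mathcal{R}$ with $\mathcal{D}(R) \geq \varepsilon$.

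Next I would invoke the classical $\varepsilon$-net theorem: for a range space of VC-dimension $d$, an i.i.d. sample $S \sim \mathcal{D}^m$ is an $\varepsilon$-net with probability at least $1-\delta$ provided
\[
m \;=\; \Omega\!\left( \frac{d \log(1/\varepsilon) + \log(1/\delta)}{\varepsilon} \right),
\]
which matches the quoted complexity. I would sketch the standard proof of this theorem for completeness: condition on a second ``ghost'' sample of size $m$, apply Sauer--Shelah to bound the number of distinct restrictions of $\mathcal{R}$ to the double sample by $(2em/d)^d$, bound the probability that a fixed range of $\mathcal{D}$-measure $\geq \varepsilon$ misses the first sample while hitting the second sample many times via a Chernoff / hypergeometric tail, and close with a union bound. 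Solving the resulting inequality in $m$ yields the stated bound.

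Finally I would connect this back to FindConsistent. Suppose FindConsistent outputs some $h \in \mathcal{H}$ that agrees with $g$ on every point of $S$; equivalently, $S \cap \{x : h(x) \neq g(x)\} = \emptyset$. If we had $R_{\mathcal{D},g}(h) = \mathcal{D}(\{x : h(x) \neq g(x)\}) > \varepsilon$, then by the $\varepsilon$-net property of $S$ the set $\{x : h(x) \neq g(x)\}$ would have to intersect $S$, contradicting consistency. Hence $R_{\mathcal{D},g}(h) \leq \varepsilon$, establishing the lemma.

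The main obstacle is not in the reduction itself (which is a routine XOR-trick) but in importing the $\varepsilon$-net theorem; all the real work hides in the double-sample / Sauer--Shelah argument. Since this is a classical result, the cleanest write-up is to cite Haussler--Welzl (or Blumer--Ehrenfeucht--Haussler--Warmuth) directly and only recall the shift $\mathcal{H} \mapsto \mathcal{R}$ and the contrapositive argument about consistent hypotheses.
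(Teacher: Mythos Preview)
Your proposal is correct: the reduction to the Haussler--Welzl $\varepsilon$-net theorem via the XOR range space $\mathcal{R}=\{\{x:h(x)\neq g(x)\}:h\in\mathcal{H}\}$, followed by the contrapositive argument for consistent hypotheses, is exactly the classical route to this bound. The paper itself does not prove this lemma at all; it merely ``recalls'' it as the standard PAC upper bound in the realizable case and uses it as a black box in the proofs of Lemmas~\ref{lem:34ofpositivegoesdown} and~\ref{lem:ifbiggerthanthreshgoesdown}. So you have supplied strictly more than the paper does, and your suggestion at the end---cite Haussler--Welzl or Blumer--Ehrenfeucht--Haussler--Warmuth and only spell out the XOR shift and the consistency contrapositive---is precisely in line with how the paper treats the statement.
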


\lempositivegoesdown*

\begin{proof}
Let $i \in \N$, $h_1, \dots, h_{i-1} : X \xrightarrow{} \{-1,+1\}$. Note that $\text{EQ}_{\mathcal{D}}(\text{Maj}(h_1,\dots,h_{i-1}),m)$ generates $m$ i.i.d. samples from the distribution $\mathcal{D}|_{ \text{Maj}(h_1, \dots, h_{i-1}) \neq g}$. Then Lemma~\ref{lem:standardPAC} guarantees that if $m = \Omega(d + \log(1/\delta))$ then with probability $1-\delta$ every $h \in \mathcal{H}$ that is consistent with $m$ i.i.d. samples from $\mathcal{D}|_{ \text{Maj}(h_1, \dots, h_{i-1}) \neq g}$ has error at most $\frac{1}{16}$ on $\mathcal{D}|_{ \text{Maj}(h_1, \dots, h_{i-1}) \neq g}$. This is equivalent to the statement of the Lemma.
\end{proof}

\lemifbigthanthreshold*

\begin{proof}
Let $i \in [t], v \in [B_{\e'}] \cap 2\Z+1$ and $S \sim \text{EQ}_{\mathcal{D}}(h',m)$. We will show that with high probability the following holds:
$$|\{x \in S : \text{Vote}_g(h_1,\dots,h_{i-1})(x) = -v \}| \geq \frac{m}{8 B_{\e'}^4} \text{.}$$
Let $X_i$ be a Bernoulli random variable that is equal to $1$ if and only if the $i$-th sample from $S$ belongs to the region $\text{Vote}_g(h_1,\dots,h_{i-1})(x) = -v$. These random variables are independent and each has success probability $p$, which we claim is at least: 
$$\frac{\mathbb{P}_{\mathcal{D}}[\text{Vote}_g(h_1,\dots,h_{i-1})(x) = -v] }{ R(\text{Maj}(h_1, \dots, h_{i-1})) + \mathbb{P}_{\mathcal{D}}[\text{Vote}_g(h_1,\dots,h_{i-1})(x) = -v] } \text{.}$$
To see that this is true note that for every $x \in X$ such that $\text{Vote}_g(h_1,\dots,h_{i-1})(x) = -v$ we have by definition of $\text{Vote}_g$ and $\text{Maj}$ that $\text{Maj}(h_1, \dots, h_{i-1}) = g(x)$. Recall that $h' = \text{Maj}(h_1,\dots,h_{i-1}) \oplus [\text{Vote}(h_1,\dots,h_{i-1}) \in\{v,-v\}]$, which means that for every $x$ such that $\text{Vote}_g(h_1,\dots,h_{i-1})(x) = -v$ we have also that $h'(x) \neq g(x)$, which means that $x$ is misclassified by $h'$. By assumption we have then that 
$$p \geq \frac{1}{\left(1 + \frac{1}{B_{\e'}^4} \right) B_{\e'}^4} \geq \frac{1}{2 B_{\e'}^4} \text{.}$$ 
We introduce the notation $a \approx_{\beta,\alpha} b$ to denote $a \in [(1-\beta)b - \alpha, (1+\beta)b + \alpha]$. By the Chernoff-Hoeffding bound we get that there exists a universal constant $\Gamma$ such that for all $0 < \beta \leq \frac{1}{2}, 0 < \alpha$:
$$
\frac{\sum_{i=1}^m X_i}{m} \approx_{\beta,\alpha} p \text{ with probability } 1 - 2e^{-\Gamma k \alpha \beta} \text{.}
$$
Setting $\beta := \frac{1}{2}, \alpha := \frac{1}{8 B_{\e'}^4}$ we get that:
$$\sum_{i=1}^m X_i \geq \frac{m}{8 B_{\e'}^4} \text{ with probability } 1 - 2e^{-  \frac{\Gamma m}{16 B_{\e'}^4} } \text{.}$$
Now observe that conditioned on a sample $x \in S$ being such that $\text{Vote}_g(h_1,\dots,h_{i-1})(x) = -v$ we know that $x$ is distributed according to $\mathcal{D}|_{\text{Vote}_g(h_1,\dots,h_{i-1})(x) = -v}$. Thus Lemma~\ref{lem:standardPAC} guarantees that if $\sum_{i=1}^m X_i \geq O(d + \log(1/\delta)) $ then with probability $1- \delta$ any function consistent with $S$ has error at most $\frac{1}{16}$ on $\mathcal{D}|_{\text{Vote}_g(h_1,\dots,h_{i-1})(x) = -v}$. So if $m = \Omega((d+\log(1/\delta))B_{\e'}^4)$ then by the union bound over the two failure events we get the result.
\end{proof}


\lemprocessconverges*

\begin{proof}
For $t\in \N$, let $\{p_i^t\}_{i \in I_\e}$ be the configuration resulting from running the process for $t$ steps. We define two metrics that will measure the progress of the process:
$$W_t := \sum_{i \in I_\e^-} 2^i \cdot p_i^t + \sum_{i \in I_{\e}^+ } p_i^t \text{, and} $$
$$M_t := \frac{\sum_{i\in I_{\e}^+ } i \cdot p_i^t}{\sum_{i\in I_{\e}^+ } p_i^t} \text{.}$$
The first metric $W_t$ is a weighted average of these masses, where more weight is put on positions that are  "to the left." The second metric $M_t$ is just the expected value of the position of all the weight on the positive part.

Let $t \in \N$. We analyze the evolution from $\{p_i^t\}_{i \in I_\e}$ to $\{p_i^{t+1}\}_{i \in I_\e}$. First note that by definition:
\begin{equation}\label{eq:wtuprbnd}
\sum_{i \in I_{\e}^+ } p_i^t \leq W_t \leq \sum_{i\in I_\e} p_i^t \leq 1 \text{.}
\end{equation}
Observe that $W_t$ is linear in the $\{p_i^t \}$'s. This means that we can analyze the contribution of each weight separately. Let us therefore analyze how the contribution from $p_i^t$ to $W$ changes as $t$ increases to $t+1$. For every $t \in \N$, any index $i \in I_\e$ belongs to one of the following types:
\paragraph{Type 1.} $i \in I_\e^- \setminus \{-B_\e\}, p_i^t \geq \frac{1}{B_\e^4} \sum_{j \in I_{\e}^+ } p_j^t$: By definition the contribution of $p_i^t$ to $W_t$ is equal to $2^i \cdot p_i^t$. At step $t+1$, $p$ of the mass moves to $i-2$ and $p_i^t - p$ moves to $i+2$. By the rules of the process (Definition~\ref{def:process}) $p \geq \frac{15}{16}p_i^t$. Hence the contribution of this mass to $W_{t+1}$ is at most
\begin{align}
&2^{i-2} \cdot p + \min(2^{i+2},1) \cdot (p_i^t - p) \nonumber  \\
&\leq  2^{i-2} \cdot \frac{15}{16} p_i^t + 2^{i+2} \cdot \frac{1}{16} p_i^t \nonumber \\
&\leq 2^{i-2} \cdot p_i^t \cdot \left(\frac{15}{16} + 1\right) \nonumber \\
&\leq \frac{1}{2} \cdot 2^i \cdot p_i^t \text{.} \label{eq:normalptsbnd}
\end{align}
This means that the \textbf{contribution to $W$ decreases by a multiplicative factor of at least $2$}.

\paragraph{Type 2.} $i \in I_\e^- \setminus \{-B_\e\}, p_i^t < \frac{1}{B_\e^4 } \sum_{j \in I_{\e}^+ } p_j^t$: As in the previous case the contribution of $p_i^t$ to $W_t$ is equal to $2^i \cdot p_i^t$. By the rules of the process $0 \leq p \leq p_i^t$ of the mass moves to $i-2$ and $p_i^t - p$ moves to $i+2$. Thus the contribution of this mass to $W_{t+1}$ is at most:
\begin{equation}\label{eq:smallpointsuprbnd}
2^{i-2} \cdot p + \min \left(2^{i+2},1 \right) \cdot (p_i^t - p) \leq p_i^t \text{.}    
\end{equation}
This means that the \textbf{contribution to $W$ increases additively by at most $p_i^t$}.

\paragraph{Type 3.} $i = -B_\e$: The contribution of $p_i^t$ to $W_t$ is equal to $2^{-B_\e} \cdot p_i^t$. By the rules of the process $p \leq p_i^t$ of the mass of $p_i^t$ moves to $i+2 = -B_\e +2$. Thus the contribution of this mass to $W_{t+1}$ is at most:
\begin{align}
&2^{-B_\e} \cdot (p_{-B_\e}^t - p) + 2^{-B_{\e}+2} \cdot p \nonumber \\
&\leq 2^{-B_{\e}+2} \cdot p_{-B_\e}^t \nonumber \\
&\leq 2\e^2 \nonumber \\
&\leq 2\e \text{,} \label{eq:leftendptbnd}
\end{align}
where the second to last inequality follows from the fact that $p_{-B_\e}^t \leq 1$ and that $B_\e = 2\ceil{\log(1/\e)}+1$. This means that the \textbf{contribution to $W$ increases additively by at most $2\e$}.

\paragraph{Type 4.} $i \in I_\e^+$: The contribution of $p_i^t$ to $W_t$ is equal to $p_i^t$. By the rules of the process $p \leq p_i^t$ of the mass moves to $i-2$ and $p_i^t - p$ of the mass moves to $i+2$ (or stays at $i$ if $i = B_\e$). Thus the contribution of this mass to $W_{t+1}$ is at most:
\begin{equation}\label{eq:positivecontribution}
\min \left(2^{i-2},1 \right) \cdot p + 1 \cdot (p_i^t - p) \leq p_i^t \text{.}    
\end{equation}
This means that the \textbf{contribution to $W$ decreases}.

Observe that the contribution to  $W$ can increase only for Type 2 and 3. Moreover, note that the total amount of mass that can be in Type 2 is at most $B_\e \cdot \frac{1}{B_\e^4} \sum_{j \in I_{\e}^+ } p_j^t \leq \frac{1}{B_\e^3} \sum_{j \in I_{\e}^+ } p_j^t$. Thus combining the observations for the 4 Types we get that:
$$
W_{t+1} 
\leq W_t + 2\e + \frac{1}{B_\e^3} \sum_{j \in I_{\e}^+ } p_j^t.
$$

Combined with the left-most inequality in \eqref{eq:wtuprbnd}, we get
\begin{equation}\label{eq:wtgeneralbound}
W_{t+1} \leq \left(1 + \frac{1}{B_\e^3} \right) W_t + 2\e . 
\end{equation}

Now we analyze the evolution of $W$ from time $t$ to time $t+1$ depending on how much mass moves from $-1$ to $+1$ and vice versa. Let $0 \leq \delta_{+,-} \leq p_1^t, 0 \leq \delta_{-,+} \leq p_{-1}^t$ be the amount of mass that is moved from $+1$ to $-1$ and from $-1$ to $+1$, respectively. We consider the following cases:

\paragraph{Case 1.} $\delta_{-,+} \geq \frac{1}{2 B_\e} \cdot \sum_{j \in I_{\e}^+ } p_j^t$: We will show that if $W_t \geq 16\e B_\e$, then $W_{t+1} \leq \left(1 - \frac{1}{9B_\e} \right)W_t$. 

First, observe that if $p_{-1}^t < \frac{1}{B_\e^4} \sum_{i \in I_\e^+} p_i^t$ then $p_{-1}^t < \frac{1}{2 B_\e} \cdot \sum_{j \in I_{\e}^+ } p_j^t \leq \delta_{-,+}$. This means that there is not enough mass on $-1$ to satisfy the assumption of this case. Thus we know that $p_{-1}^t \geq \frac{1}{B_\e^4} \sum_{i \in I_\e^+} p_i^t$. By the rules of the process only $\frac{1}{16}$ of $p_{-1}^t$ can potentially move to $+1$.
This implies that $p_{-1}^t \geq 16 \delta_{-,+} \geq \frac{16}{2 B_\e} \cdot \sum_{j \in I_{\e}^+ } p_j^t$. Using the properties of Type 1 we know that the contribution of $p_{-1}^t$ to $W$ goes down by at least a factor $2$.

Let us now look at the evolution of the contribution of $\sum_{i \in I_{\e}^+ } p_i^t$. We know from Type 4 that for this type the contribution does not increase. 

Since we know that $p_{-1}^t \geq \frac{8}{B_\e} \cdot \sum_{j \in I_{\e}^+ } p_j^t$, the total contribution of all the weight on $\{-1\} \cup  I_{\e}^+$ goes down by a factor of at least
\begin{align*}
   \frac{\frac{2}{B_\e}+1}{\frac{4}{B_\e}+1} \leq 1 - \frac{1}{B_\e} \text{,}
\end{align*}
where we used that $\e < \frac{1}{32}$.

It remains to include the contributions of the mass at $I_{\e}^- \setminus \{-1\}$. We know from the properties of Type 1, 2, and 3 that for those the contribution decreases by a factor $2$ with the exception of very "small" masses and the mass at the left-hand side boundary. Since $\frac{1}{2} \leq 1 - \frac{1}{B_\e}$ we get:


\begin{align}
W_{t+1} 
&\leq \left(1 - \frac{1}{B_\e} \right)\cdot W_t+  \frac{1}{ B_\e^3} \sum_{j \in I_{\e}^+ } p_j^t + 2\e   \nonumber \\
&\leq \left(1 - \frac{1}{B_\e} + \frac{1}{B_{\e}^3} \right)\cdot W_t+  2\e \text{,} 
\label{eq:case1bound}
\end{align}
where in the last inequality we used the left-most inequality from \eqref{eq:wtuprbnd}.

To get the claimed bound, observe that if $W_t \geq 16\e B_\e$, then by \eqref{eq:case1bound} we get that 
\begin{equation}\label{eq:case1final}
W_{t+1} \leq \left(1 - \frac{1}{9B_\e} \right)W_t \text{.}
\end{equation}

\paragraph{Case 2.} $\delta_{+,-} \geq \frac{1}{2 B_\e} \cdot \sum_{j \in I_{\e}^+ } p_j^t$: We will show that if $W_t \geq 16\e B_\e$, then $W_{t+1} \leq \left(1 - \frac{1}{9B_\e} \right)W_t$.  

First, observe that the contribution of $p_1^t$ to $W$ decreases by at least $\frac{1}{4 B_\e} \cdot \sum_{j \in I_{\e}^+ } p_j^t$. This is true since at least $\frac{1}{2 B_\e} \cdot \sum_{j \in I_{\e}^+ } p_j^t$ of the mass was moved from position $+1$ which is weighted by $1$ to position $-1$ which is weighted by $\frac12$. 

Let us now look at the evolution of the contribution of $I_{\e}^+ \setminus \{+1\}$. We know from Type 4 that for this type the contribution does not increase.

Since we know that the contribution of $p_1^t$ decreased by at least $\frac{1}{4 B_\e} \cdot \sum_{j \in I_{\e}^+ } p_j^t$ the total contribution of all the weight in $I_\e^+$ goes down by a factor of at least
$$1 - \frac{1}{4 B_\e} \text{.}$$

It remains to include the contributions of the mass at $I_{\e}^-$. We know from the properties of Type 1, 2, and 3 that for those the contribution decreases by a factor $2$ with the exception of very "small" masses and the mass at the left-hand side boundary. Since $\frac{1}{2} \leq 1 - \frac{1}{4B_\e}$ we get:

\begin{align}
W_{t+1} 
&\leq \left(1 - \frac{1}{4B_\e} \right) \cdot W_t +  \frac{1}{B_\e^3} \sum_{j \in I_{\e}^+} p_j^t + 2\e \nonumber \\ 
&\leq \left(1 - \frac{1}{4B_\e} + \frac{1}{B_{\e}^3} \right) \cdot W_t + 2\e \label{eq:case2bound}  
\end{align}
where in the last inequality we used the left-most inequality from \eqref{eq:wtuprbnd}.


To get the claimed bound, observe that if $W_t \geq 16\e B_\e$ then by \eqref{eq:case2bound} we get that 
\begin{equation}\label{eq:case2final}
W_{t+1} \leq \left(1 - \frac{1}{9B_\e} \right)W_t \text{.}
\end{equation}

\paragraph{Case 3.} $\delta_{-,+},\delta_{+,-} < \frac{1}{2 B_\e} \cdot \sum_{j \in I_{\e}^+ } p_j^t$: We will show that $M_{t+1} \leq M_t - 1$. 

For simplicity we introduce notation $\mu_t := M_t \cdot\sum_{j \in I_{\e}^+ } p_j^t$. First let's analyze what happens when $\delta_{-,+} = 0$. By the rules of the process at least $\frac{15}{16}\sum_{j \in I_{\e}^+ } j \cdot p_j^t$ of the mass on $I_{\e}^+ $ moves down. This and the assumption that $\delta_{+,-} < \frac{1}{2 B_\e} \cdot \sum_{j \in I_{\e}^+ } p_j^t$ gives the following two bounds: 
\begin{align*}
\mu_{t+1} 
&\leq \mu_t + \left(-2 \cdot \frac{15}{16} + 2 \cdot \frac{1}{16}\right) \cdot \sum_{j \in I_{\e}^+ } p_j^t + \frac{1}{2 B_\e} \cdot \sum_{j \in I_{\e}^+ } p_j^t \\
&\leq \mu_t + \left( \frac{1}{2 B_\e} - \frac{7}{4} \right) \cdot \sum_{j \in I_{\e}^+ } p_j^t
\end{align*}
and
\begin{align*}
\sum_{j \in I_{\e}^+ } p_j^{t+1} 
&\geq \sum_{j \in I_{\e}^+ } p_j^t - \frac{1}{2 B_\e} \cdot \sum_{j \in I_{\e}^+ } p_j^t \\
&= \left(1 - \frac{1}{2B_\e} \right) \sum_{j \in I_{\e}^+ } p_j^t
\end{align*}
Combining the two bounds we get: 
\begin{align}
&M_{t+1} = \frac{\mu_{t+1}}{\sum_{j \in I_{\e}^+ } p_j^{t+1}} \nonumber \\
&\leq \frac{\mu_t + \left( \frac{1}{2 B_\e} - \frac{7}{4} \right) \cdot \sum_{j \in I_{\e}^+ } p_j^t}{ \left(1 - \frac{1}{2B_{\e}} \right)\sum_{j \in I_{\e}^+ } p_j^t } \nonumber \\
&= \frac{2 B_\e}{2 B_\e - 1} \left(\frac{\mu_t}{\sum_{j \in I_{\e}^+ } p_j^t} - \frac74 \right) + \frac{1}{2B_\e - 1} \nonumber \\
&= \frac{2 B_\e}{2 B_\e - 1} \left(M_t - \frac74 \right) + \frac{1}{2B_\e - 1} \nonumber \\
&\leq M_t - 1 \label{eq:mtdecreases}
\end{align}
where in the last equality we used the definition of $\mu_t$. In the last inequality we used the fact that $M_t \leq B_\e$. Note that if $\delta_{-,+} \neq 0$ then $M_{t+1}$ can only decrease as $M_{t+1} \in [1,B_\e]$ and the mass that comes from $-1$ to $+1$ arrives at $1$. Thus by \eqref{eq:mtdecreases} we get that in Case 3:
\begin{equation}\label{eq:mtdecreasesbyconstant}
M_{t+1} \leq M_t - 1 \text{.}    
\end{equation}


\paragraph{Merging Cases 1, 2 and 3.} 
First, observe that Cases 1, 2 and 3 cover all potential values of $\delta_{-,+}$ and $\delta_{+,-}$. Then note that by \eqref{eq:wtgeneralbound} if $W_t \geq 10 \e B_\e^3$ then:
\begin{equation}\label{eq:wtgeneralboundforhighwt}
W_{t+1} \leq \left(1 +\frac{12}{10 B_\e^3} \right) W_t    
\end{equation}

Combining \eqref{eq:case1final}, \eqref{eq:case2final},  \eqref{eq:mtdecreasesbyconstant} and \eqref{eq:wtgeneralboundforhighwt} we get that if $W_t \geq \max (10\e B_\e^3, 16\e B_\e) = 10\e B_\e^3$ then either $W_t$ decreases by a multiplicative factor $\left(1 - \frac{1}{9B_\e} \right)$ or $M_t$ decreases by an additive $1$ and $W_t$ increases by at most a multiplicative factor of $\left(1 +\frac{12}{10 B_\e^3} \right)$. If on the other hand $W_t < 10\e B_\e^3$ then we have by the definition of $W$ that $W_{t+1} \leq 4 W_t$, as each amount of mass can increase it's contribution by at most a multiplicative factor of $4$.

As $M_t \in [1, B_\e]$ it means that in every consecutive $B_\e$ steps rule \eqref{eq:case1final}/\eqref{eq:case2final} is triggered at least once. Thus we have that:
\begin{align}
W_{t+ B_\e} 
&\leq \max \Bigg( \left(1 + \frac{12}{10 B_\e^3} \right)^{B_\e} \left(1 - \frac{1}{9 B_\e} \right) W_t, \nonumber \\
&40\e B_\e^3 \left(1 + \frac{12}{10 B_\e^3} \right)^{B_\e} \Bigg) \nonumber \\
&\leq \max \left( e^{\frac{12}{10B_\e^2}} \cdot e^{- \frac{1}{9 B_\e}} \cdot W_t, 64 \e B_\e^3 \right) \nonumber \\
&\leq \max \left( e^{-\frac{1}{81 B_\e}} \cdot W_t, 64 \e B_\e^3 \right) \label{eq:wtrecurrence} \text{.}
\end{align}
where in the first inequality we used that either for all $t \in [t,t+B_\e]$ we have that $W_t \geq 10\e B_\e^3$ or there exists $t' \in [t,t+B_\e]$ such that $W_{t'} < 10\e B_\e^3$. The two terms govern the first and the second case respectively. If $W_{t'} < 10 \e B_\e^3$ then $W$ grows by a multiplicative factor of at most $4$ in every step until it reaches at most $40\e B_\e^3$ and then it grows at most by a multiplicative factor of $1 + \frac{12}{10 B_\e^3}$ per step. In the last inequality we used that $\e < \frac{1}{32}$ and that $B_\e = 2\ceil{\log(1/\e)} +1$.

By the right-most inequality of \eqref{eq:wtuprbnd} we get that $W_0 \leq 1$, which by using \eqref{eq:wtrecurrence} implies that after $t = O(B_\e^3)$ steps $W_t \leq 64\e B_\e^3$, which by the left-most inequality of \eqref{eq:wtuprbnd} gives:
$$\sum_{i \in I_{\e}^+ } p_i^t \leq 64\e B_\e^3 \text{.}$$
\end{proof}

\thmqc*

\begin{proof}
Observe that the number of queries $q$ asked by the algorithm is upper bounded by:
\begin{align*}
&q \leq t \cdot (B_{\e'}+1) \cdot m \\
&\leq O \left(B_{\e'}^3 \cdot B_{\e'} \cdot (d + \log(B_{\e'}^4) + \log(1/\delta)) \cdot B_{\e'}^4 \right) \\
&\leq O \left((d + \log(B_{\e'}^4) + \log(1/\delta))B_{\e'}^8  \right)\text{.}
\end{align*}
Note that 
$$B_{\e'} \leq O\left(\log \left(\frac{\log(1/\e)}{\e}\right) \right) \leq O(\log(1/\e)) \text{.}$$
Thus combining the two bounds we get an upper bound for the number of queries:
\begin{align*}
q 
&\leq O((d + \log\log(1/\e) + \log(1/\delta))\log^8(1/\e)) \\
&\leq O((d + \log(1/\delta))\log^9(1/\e)) \text{.}
\end{align*}

Now we prove the correctness of the algorithm. First observe that by the definition of $m$ and the union bound over $O(B_{\e'}^4)$ many events we know that success events of Lemmas~\ref{lem:34ofpositivegoesdown} and \ref{lem:ifbiggerthanthreshgoesdown} hold when lemmas are applied to functions of the form $\text{Maj}(h_1,\dots, h_{i-1}), \text{Vote}_g(h_1, \dots, h_{i-1})$, where $h_1, \dots, h_t$ are functions constructed throughout the algorithm. Observe then that if for every $i \in I_{\e'}, j \in [t]$ $\left(t = O \left(B_{\e'}^3 \right)\right)$ we define:
$$p_i^j := \mathbb{P}_{x \sim \mathcal{D}}[\text{Vote}_g(h_1, \dots, h_j)(x) = i] \text{,}$$
then $\{ \{p_i^j\}_{i \in I_{\e'}} \}_{j \in [t]}$ satisfies the rules of the process on $2\Z + 1$ with parameter $\e'$ (Definition~\ref{def:process}). Lemma~\ref{lem:34ofpositivegoesdown} is responsible for the first property and Lemma~\ref{lem:ifbiggerthanthreshgoesdown} is responsible for the second property. Thus we can apply Lemma~\ref{lem:processconverges} to $\{ \{p_i^j\}_{i \in I_{\e'}} \}_{j \in [t]}$ to get that at the end of the process we have:
$$
\sum_{i \in I_{\e'}^+ } p_i^t 
\leq 64 \cdot \e' \cdot B_{\e'}^3
\leq \e \text{,}   
$$ 
where in the last inequality we used the assumption that $\e < \frac{1}{32}$. To conclude observe that:
\begin{align*}
\sum_{i \in I_{\e'}^+ } p_i^t 
&= \sum_{i \in I_{\e'}^+ } \mathbb{P}_{x \sim \mathcal{D}}[\text{Vote}_g(h_1, \dots, h_t)(x) = i] \\
&= R(\text{Maj}(h_1, \dots, h_t)) \text{.}
\end{align*}

\end{proof}


\begin{proof}[Proof of Theorem~\ref{thm:ip}.]
Let $X$ be a feature space, $\e \in (0,\frac{1}{32})$, $\mathcal{H}$ be a hypothesis class of VC-dimension $d$. We will show that EQ-learner (Algorithm~\ref{alg:learner}) satisfies the conditions of the theorem.

Assume that the EQ-learner is run with parameters $\e, \delta = 1/3, \mathcal{H}$ and every call to $\text{EQ}_{\mathcal{D}}$ replaced by an interaction with \textbf{A}. This setup satisfies the requirements of the adversarial learning game (Definition~\ref{def:game}). Now there are two possible scenarios. First scenario is that throughout the run of the algorithm, for all functions $f_t$ that \textbf{L} presents to \textbf{A} we have that $\mathbf{A}(f_t, EX_{\mathcal{D}}) = EQ_{\mathcal{D}}(f_t)$. Then Theorem~\ref{thm:querycomplexity} guarantees that with probability $2/3$ the first statement of the theorem is true. The other scenario is that there exists $f_t$ that \textbf{L} presented to \textbf{A} such that $\mathbf{A}(f_t, EX_{\mathcal{D}}) \neq EQ_{\mathcal{D}}(f_t)$. This implies the second statement of the theorem.

What is left is to observe that the EQ-learner queries the $\text{EQ}_{\mathcal{D}}$ only for $O(\text{polylog}(1/\e))$ many different functions. This is true as the number of different functions sent to $\text{EQ}_{\mathcal{D}}$ is upper-bounded by $t \cdot (B_{\e'} + 1) \leq O(\text{polylog}(1/\e))$, where parameters $t$ and $B_{\e'}$ are defined in the algorithm.
\end{proof}

\end{document}